\documentclass[lettersize,journal]{IEEEtran}
\usepackage{amsmath,amsfonts}
\usepackage{algorithmic}
\usepackage{algorithm}
\usepackage{array}
\usepackage[caption=false,font=normalsize,labelfont=sf,textfont=sf]{subfig}
\usepackage{textcomp}
\usepackage{stfloats}
\usepackage{url}
\usepackage{verbatim}
\usepackage{graphicx}
\usepackage{cite}

\usepackage[utf8]{inputenc}
\usepackage[english]{babel}
\usepackage[usenames]{color}
\usepackage{amsmath}
\usepackage{amssymb}
\usepackage{moreverb}
\usepackage{listings}

\def\t{{\boldsymbol \tau}}
\def\f{\mathbf{f}}

\def\th{{\boldsymbol \theta}}
\def\tt{{\boldsymbol {\tilde \theta}}}
\def\dtt{{\boldsymbol {\dot{\tilde \theta}}}}

\def\dth{{\boldsymbol {\dot \theta}}}

\def\tq{{\bf {\tilde q}}}	
\def\dtq{{\bf {\dot {\tilde q}}}}

\def\q{{\bf q}}
\def\dq{{\bf \dot q}}
\def\ddq{{\bf \ddot q}}

\def\x{{\bf x}}

\def\calH{{\mathcal H}}

\def\calK{{\mathcal K}}
\def\calU{{\mathcal U}}
\def\calF{{\mathcal F}}

\def\hal{{1 \over 2}}
\def\rea{\mathbb{R}}

\newtheorem{prop}{Proposition}

\begin{document}
	
	\title{Finite-Time Teleoperation of Euler-Lagrange\\ Systems via Energy-Shaping}
	
	\author{Lazaro F. Torres, Carlos I. Aldana, Emmanuel Nu\~no and Emmanuel Cruz-Zavala
		\thanks{This work has been supported by the Mexican SECIHTI Grant CBF2023-2024-1964. The work of L. Torres has been supported by a SECIHTI Scholarship,  Grant 1319340.\\ L. Torres, C. Aldana, E. Nu\~no and E. Cruz-Zavala are with the Department of Computer Science, CUCEI, University of Guadalajara, Mexico {(e-mail: lazaro.torres7987@alumnos.udg.mx, \{ivan.aldana, emmanuel.nuno, emmanuel.cruz1692\}@academicos.udg.mx})}
	}
	
\maketitle
	
	\begin{abstract}
This paper proposes a family of finite-time controllers for the bilateral teleoperation of fully actuated nonlinear Euler-Lagrange systems. Based on the energy-shaping framework and under the standard assumption of passive interactions with the human and the environment, the controllers ensure that the position error and velocities globally converge to zero in the absence of time delays. In this case, the closed-loop system admits a homogeneous approximation of negative degree, and thus the control objective is achieved in finite-time. The proposed controllers are simple, continuous-time proportional-plus-damping-injection schemes, validated through both simulation and experimental results.
\end{abstract}
	
	\begin{IEEEkeywords}
		Bilateral Teleoperation, Euler-Lagrange Systems, Finite-Time Control, Passivity-based Control.
	\end{IEEEkeywords}
	
\section{Introduction}

{
Bilateral teleoperation systems consist of a local and a remote robot manipulator that exchange information through a communication channel, enabling a human operator to control the remote robot in an environment subjected to unknown external forces \cite{Basanez2023}. In bilateral teleoperation, the roles of local and remote manipulators are interchangeable, and the primary control objective is to ensure position synchronization between both robots \cite{Hokayem:06}.

Recent technological advances have expanded the use of teleoperation systems across various applications, particularly in the medical industry \cite{Dong2018, Atashzar2017}. This has driven the development of advanced control strategies, including sliding-mode control \cite{Tang2019, liu_2019}, neural networks \cite{Yang20171, Chen2020}, adaptive control \cite{Kebria2020, SARRAS20144817}, and passivity-based control \cite{Atashzar2017, Zhai2014}. Among these, proportional-plus-damping (P+d) schemes stand out for their simplicity. They were initially introduced for bilateral teleoperation with time delays \cite{Nuno2008} and later extended to scenarios without velocity measurements and with actuator saturation \cite{Zhai2014}. However, these schemes only guarantee asymptotic stability, meaning that the control objective is achieved only as time approaches infinity.

This work focuses on ensuring finite-time (FT) convergence using nonlinear PD-like schemes. Compared to traditional (linear) PD-like controllers, FT schemes offer better robustness under uncertain conditions, faster transient responses, and higher precision \cite{Orlov2009, Bhat2005}, all of which are desirable characteristics in teleoperation systems. An FT controller for teleoperation systems must ensure bounded position errors and velocities in the presence of human or environment forces, and then drive the manipulators to a consensus position in finite-time once these forces vanish.

Current control schemes that ensure FT stability in teleoperation systems often rely on sliding-mode techniques \cite{Dao2021, Nguyen2021}, sometimes combined with neural networks \cite{Wang20221, Zhang2018, Zhang2022} or disturbance observers \cite{Dong2018, Zhang2023}. However, these methods suffer from the chattering effect, which can damage actuators and limit their applicability \cite{7600456}. Continuous-time FT controllers remain rare, with some solutions proposing variable-gain strategies \cite{Gu2024, Zhang2021}. Nonetheless, these works only guarantee convergence to a neighborhood of the origin.

One way of proving FT stability of the origin is to show that the origin is asymptotically stable (AS), and then prove that the closed-loop system admits a homogeneous approximation (HA) of negative degree \cite{Bacciotti2005}. Along this line, \cite{Yang2022} proposes a P+d control scheme that is claimed to ensure FT stability of the origin even in the presence of delayed communications. However, the conclusions in the work \cite{Yang2022} are not well-founded due to the following reasons: 1) in the presence of time delays, the resulting closed-loop system is not autonomous, and therefore the standard homogeneity tools employed in such work cannot be applied; and 2) the proof relies on Barbalat's Lemma, which requires differentiable accelerations, implying that the controller itself must be differentiable. Nevertheless, in order to achieve FT stability the controller cannot be differentiable, as is the case in \cite{Yang2022}. Thus, the result in \cite{Yang2022} does not hold. In addition, we emphasize that the control scheme reported in \cite{Yang2022} is not saturated and relies on velocity measurements, which limits its practical applicability. Therefore, the control of bilateral teleoperation systems via output-feedback schemes that avoid actuator saturation is still an open problem.
 
In this paper, we propose a family of continuous FT controllers for bilateral teleoperation systems based on the energy-shaping framework \cite{Ortega1998}. Our control design consists of shaping the potential energy of the robots, modeled as Euler-Lagrange (EL) systems, such that the origin is the unique equilibrium point, and then injecting damping to ensure that the origin is asymptotically stable when there is no time delay and the external forces are zero. If velocities are not available, damping is injected in the controller dynamics \cite{Ortega1998,Santibanez1998}. Assuming passive interactions with the human and the environment forces, we prove that the position errors and the velocities remain bounded. Then, when these forces vanish, we establish global FT stability of the origin using homogeneity tools. The resulting controllers are simple, continuous P+d schemes that avoid chattering. Our contributions include:
\begin{enumerate}
\item A novel family of FT controllers with continuous control laws for bilateral teleoperation systems that result in a continuous closed-loop.
\item A controller that does not require velocity measurements.
\item Controllers that avoid actuator saturation.
\item Experimental validation using 6-degrees-of-freedom robots.
\end{enumerate}
It is worth mentioning that the proposed design does not explicitly account for delays, although, as shown in the experiments, the controllers can effectively handle them in practice.

The paper is organized as follows: Section II covers the notation and the preliminaries; Section III introduces the dynamic model and the structure of the energy-shaping controllers; Section IV presents four examples of the controller design; Section V provides complementary remarks; Section VI presents numerical and experimental results; and finally, Section VII outlines the conclusions of our work.
} 
	
\section{Preliminaries}

This paper uses the following notation. $\mathbb{R}:=(-\infty, \infty)$, $\mathbb{R}_{>0}:=(0, \infty)$, $\mathbb{R}_{\geq 0}:=[0, \infty)$, $\mathbb{N}:= \{1,2,3,\dots\}$ and $\bar n:=\{1, 2, . . . , n\}$ for $n\in\mathbb{N}$. $\mathbf{I}_n \in \rea^{n\times n}$ denotes the $n\times n$ identity matrix, and $\mathbf{1}_{n} \in \rea ^{n}$ defines de vector of $n$ elements equal to one. For any $m\in \mathbb{N}$ and any $\delta\in\rea_{>0}$, $B_\delta:=\{{\bf x}\in \rea^m:\|{\bf x}\|<\delta\}$ and $S^{m-1}_\delta:=\{{\bf x}\in\rea^m:\|{\bf x}\|=\delta\}$ are an open ball and an $m-1$ sphere, centered at the origin with radius $\delta>0$, respectively. { In particular, $S^{m-1}:=S^{m-1}_{1}$.} A function $\mathbf{f}(t): \rea \mapsto \rea ^m$ is said to be of class $\mathcal{C}^{k}$, for $k \in \mathbb{N}$, if its derivatives $\mathbf{\dot f}$, $\mathbf{\ddot f}$, ..., $\mathbf{f}^{(k)}$ exist and are continuous. For all $x\in\rea$, $|x|$ is its absolute value. For $\x\in\rea^m$, $m\in \mathbb{N}$, $\|\x\|$ stands for its Euclidean norm. For any $\mathbf{x} \in \rea ^m$, $\nabla_{\mathbf{x}}:=[\partial_{x_1},\cdots, \partial_{x_m}]^{\top}$ stands for the gradient operator of a scalar function, where $\partial_{x_j}:=\frac{\partial}{\partial_{x_j}}$ and $j\in \bar{m}$. Operator ${\rm col}\{\cdot\}$ denotes a column vector concatenation. {Operator $\otimes$ denotes the Kronecker product.} {The argument of all time-dependent signals is omitted, e.g., $\q \equiv \q(t)$. The subscript $i\in\{l, r\}$ is used to denote the local ($l$) and remote $(r)$ manipulators.} For any $x\in\rea$ and $p>0$, we define the signed power function $\lceil x\rfloor^p:\rea\mapsto\rea$ as a strictly increasing odd (continuous) function given by $\lceil x\rfloor^p:=|x|^p{\rm sign}\left(x\right)$, where ${\rm sign}\left(x\right)$ is the standard {sign function}, and meets the properties described in \cite{Cruz20201}.	A ($p$, $\delta$)-saturation function ${\rm sat}_{\delta}(\lceil x \rfloor^p):\rea \mapsto \rea$, $p$, $\delta \in \rea_{>0}$, is a strictly increasing odd function defined by:
	\begin{equation}
		{\rm sat}_{\delta}(\lceil x \rfloor^p) =  \left\{\begin{matrix} \lceil x \rfloor^p& \text{if} & | x | < \delta, \\
		\delta^{p}{\rm sign}(x) & \text{if} & | x | \geq \delta,\end{matrix}\right. 
	\end{equation}
	this saturation function satisfies the following property:
	
\textit{Property (\textbf{P1}):} For all $x\in \rea$ and $p$, $\delta\in\rea _{>0}$, $\lceil{\rm sat}_{\delta}( x )\rfloor^p={\rm sat}_{\delta}(\lceil x \rfloor^p)$ and $s(x,\delta,p):=\int_{0}^x {\rm sat}_{\delta}(\lceil x \rfloor^p)$, $s(x)\in\mathcal{C}^1$, where:
		\begin{equation}
		\label{sfunc}
			s(x,\delta,p) = \left\{\begin{matrix} \frac{1}{p+1}|x|^{p+1}& \text{if} & | x | < \delta, \\
			\delta^{p}|x| - \frac{p}{p+1}\delta ^{p+1} & \text{if} & | x | \geq \delta.
			\end{matrix}\right.  
		\end{equation}
	$\hfill \triangleleft$
	
	Note that $\delta^{p}|x| - \frac{p}{p+1}\delta ^{p+1} \geq \frac{1}{p+1}\delta ^{p}|x|$ for all $|x|\geq \delta$.
	When the signed power and ($p$, $\delta$)-saturation functions are vector-valued, we consider them to be applied element-wise, i.e., for all $\mathbf{x}\in\rea^m$, we have $\lceil\mathbf{x}\rfloor^p:=[\lceil x_1 \rfloor^p, ..., \lceil x_m \rfloor^p]^{\top}$ and ${\rm sat}_{\delta}(\lceil \mathbf{x} \rfloor^p):=[{\rm sat}_{\delta}(\lceil x_1 \rfloor^p), ..., {\rm sat}_{\delta}(\lceil x_m \rfloor^p)]^{\top}$, respectively.
	
	\subsection{Finite-Time Stability and Homogeneous Systems}

	Consider the following dynamical system described by
	\begin{equation}
		\label{eq:ftconspts}
		{\bf \dot x}={\bf f}({\bf x}), \quad {\bf x}(0)={\bf x}_{0},
	\end{equation}
	where ${\bf x}\in \rea^m$ is the state vector. {Assume that: ${\bf f}: \rea^m\mapsto \rea^m$ is a continuous vector field; ${\bf x}={\bf 0}$ is an equilibrium point of (\ref{eq:ftconspts}), i.e., ${\bf f(0)=0}$; and solutions to \eqref{eq:ftconspts}, denote as $\x(t,\x_0)$, exist and are unique in forward time.}

	{\it Definition 1}\cite{Bhat2000}, \cite{Bhat2005}: {The point ${\bf x}={\bf 0}$} is Finite-Time Stable (FTS) if it is Lyapunov stable and there exists a locally bounded function $T:B_{\delta}\mapsto \mathbb{R}_{\geq0}$ such that for each ${\bf x}_0\in B_\delta\setminus \{\bf 0\}$, any solution ${\bf x}(t,{\bf x}_0)$ of (\ref{eq:ftconspts}) {	is defined on $t\in[0,T({\bf x}_{0}))$}, {$\lim_{t\to T(\x_{0})} {\bf x}(t,{\bf x}_0)={\bf 0}$}  and ${\bf x}(t,{\bf x}_0)={\bf 0}$ for all $ t\geq T({\bf x}_0)$. If $B_\delta=\rea^m$, $\mathbf{x=0}$ is globally FTS. $\hfill \triangleleft$ 
	
	{We recall the concept of weighted homogeneity that allows studying the finite-time stability properties of  system (\ref{eq:ftconspts}).}
	
	{\it Definition 2  \cite{Bacciotti2005}}: Let $r_j>0$, $j\in \bar m$, be the weights of the elements $x_j$ of ${\bf x}\in \rea^m$ and define the vector of weights as $\textbf{r}:=[r_1,...,r_m]^{\top}\in \rea^m$. Let $\Delta_{\epsilon}^{{\bf r}}$ be the dilation operator such that $\Delta_{\epsilon}^{{\bf r}}{\bf x}:=[\epsilon^{r_{1}}x_{1},...,\epsilon^{r_m}x_m]^\top$. A function $V:\rea^m\mapsto \rea$ (respectively a vector field ${\bf f}:\rea^m\mapsto \rea^m$) is said to be ${\bf r}$-homogeneous of degree $l_r\in \rea$, or $({\bf r},l_r)$-homogeneous for short, if for all $ \epsilon\in \rea_{>0}$ and for all ${\bf x}\in \rea^m$, the equality $V(\Delta_{\epsilon}^{{\bf r}}{\bf x})=\epsilon^{l_r}V({\bf x})$ (respectively, ${\bf f}(\Delta_{\epsilon}^{{\bf r}}{\bf x})=\epsilon^{l_r}\Delta_{\epsilon}^{{\bf r}}\mathbf{f(x)}$) holds. System (\ref{eq:ftconspts}) is called $({\bf r},l_r)$-homogeneous if the vector field ${\bf f}$ is $({\bf r},l_r)$-homogeneous. $\hfill \triangleleft$
	
	The following property is fundamental for our solution:
	
	\textit{Property (\textbf{P2}):} Let $V$ and $\mathbf{f}$ be $\mathbf{r}$-homogeneous of degree $l_V \in \rea_{>0}$ and $l_f\in\rea$, respectively. The Lie derivative of $V(\mathbf{x})$ along the vector  field $\mathbf{f}(\mathbf{x})$, $L_\mathbf{f}V := \nabla_{\mathbf{x}}V(\mathbf{x})^{\top}\mathbf{f}(\mathbf{x})$, is $(\mathbf{r}, l_V+l_f)$-homogeneous. In fact, $\partial_{x_j}V(\mathbf{x})$ is $(\mathbf{r},l_V-r_j)$-homogeneous. $\hfill \triangleleft$
	
{ Local stability properties of $\x={\bf 0}$, of (\ref{eq:ftconspts}), can be studied using homogeneous approximations \cite{Hermes1991, Bacciotti2005}. Here, we rephrase Corollary 5.5 from \cite{Bacciotti2005}.}
	
	{\it Lemma 1 \cite{Bacciotti2005}}: Consider system (\ref{eq:ftconspts}) with ${\bf f}({\bf x})={\bf f}_H({\bf x})+ {\bf f}_{NH}({\bf x})$. {Let ${\bf f}_{H}({\bf x})$ be} an $({\bf r},l_r)$-homogeneous continuous vector field such that $\x={\bf 0}$ is a locally AS equilibrium point of ${\bf \dot x}={\bf f}_{H}(\mathbf{x})$. Assume that ${\bf f}_{NH}({\bf x})$ is a continuous vector field such that $\mathbf{f}_{NH}(\mathbf{0})={\bf 0}$ and the following {\it vanishing condition} 
	$
	\lim\limits_{\epsilon\rightarrow0}\epsilon^{-(l_r+r_j)}f_{NH_j}(\Delta_{\epsilon}^{\bf r}{\bf x})=0
	$ 
	holds uniformly with respect to (w.r.t.) { ${\bf x}\in S^{m-1}$, for all $j\in \bar m$. Then, ${\bf x}={\bf 0}$ is locally AS. Further, if $l_r=0$ and all $r_j=1$, then  ${\bf x}={\bf 0}$ is locally Exponentially Stable (ES). Otherwise, if $l_r<0$, ${\bf x}={\bf 0}$ is locally FTS.} $\hfill \triangleleft$
	
	The next lemma is a direct consequence of {\it Lemma 1}, see \cite{Rio2014,Rio2017,Hong2002,Orlov2009} for other equivalent versions. 
	
	{\it Lemma 2}: Suppose that system (\ref{eq:ftconspts}) admits a homogeneous approximation of the form ${\bf \dot x}={\bf f}_{H}(\mathbf{x})$ such that the vector field ${\bf f}_H({\bf x})$ is $(\mathbf{r}, l_r)$-homogeneous and ${\bf x=0}$ is AS. Further, let ${\bf x= 0}$ be globally AS. { If $l_r=0$ and all $r_j=1$, then ${\bf x= 0}$ is globally AS and locally ES. Otherwise, if $l_r<0$,  ${\bf x= 0}$ is globally FTS.} $\hfill \triangleleft$
	
	{\it Remark 1}: The closed-loop system that arises in this paper can be written in the form of (\ref{eq:ftconspts}), { which can be separable into homogeneous and non-homogeneous terms. Hence, for some $\mathbf{f}_H(\mathbf{x})$, system (\ref{eq:ftconspts}) can be transformed to ${\mathbf {\dot x}} = \mathbf{f}_H(\mathbf{x}) +\mathbf{f}_{NH}(\mathbf{x})$, where $\mathbf{f}_{NH}(\mathbf{x}) = \mathbf{f}(\mathbf{x}) -\mathbf{f}_{H}(\mathbf{x})$. Here,} the {\it vanishing condition} is equal to $\displaystyle \lim_{\epsilon \to 0} \sup_{\mathbf{x}\in\mathcal{S}^{m-1}} \| \epsilon^{-l_r}(\Delta_\epsilon^{\mathbf{r}})^{-1} \mathbf{f}(\Delta_\epsilon^{\mathbf{r}}\mathbf{x}) - \mathbf{f}_{H}(\mathbf{x}) \| = 0$. $\hfill \triangleleft$
	
	{ Now, consider system (\ref{eq:ftconspts}) written in its input-output form:
	\begin{equation}
		\dot{\x} = \f(\x, \mathbf{u}), \quad \mathbf{y} = \mathbf{h}(\x),
		\label{eq:input-output form}
	\end{equation}
	where $\mathbf{u}\in \rea^n$ is the input, and $\mathbf{y}\in \rea^n$ is the output.
	The next lemma is a well-known result of passivity, see e.g. \cite{Ortega1998}. This is widely used to prove global asymptotic stability.
	
	{\it Lemma 3} \cite{Ortega1998}: Suppose that system (\ref{eq:input-output form}) is output strictly passive from the output $\mathbf{y} = \mathbf{h}(\x)$, and the storage function is ${\mathcal{H}}(\mathbf{x}) > 0$ for all $\mathbf{x} \neq \mathbf{0}$, and ${\mathcal{H}}(\mathbf{0}) = 0$. If (\ref{eq:input-output form}) is zero-state detectable, this is, $\mathbf{y}={\bf 0} \Rightarrow \lim\limits_{t\rightarrow \infty} \x(t) = \mathbf{0}$, then $\mathbf{x} = \mathbf{0}$ is AS. Additionally, if ${\mathcal{H}}(\mathbf{x})$ is radially unbounded, then $\mathbf{x} = \mathbf{0}$ is globally AS.}   $\hfill \triangleleft$

\section{System Dynamics and Controller Design}
	
	{Each robot is modeled as an EL system with potential energy ${^s\calU} _i (\q _i)$, and kinetic energy ${^s \mathcal{K}_i} (\q_i, \dq_i) := \hal \dq_i^{\top}\mathbf{M}_i(\q_i)\dq_i$, where $\mathbf{M}_i(\q_i) \in \rea^{n\times n}$ is the inertia matrix. The vectors $\q_i$ and $\dq_i \in \rea ^n$ are the generalized position and velocity of the $i \in \{l, r\}$ manipulator. The inputs of the system are the control law $\t _i$, and the external {\em unknown} forces $\f _i$ that are functions of time.  Following the EL equations of motion, the dynamics of each robot is}
	\begin{equation}
		{\bf M}_i (\q _i) \ddq _i + {\bf C}_i (\q _i, \dq _i) \dq _i + \nabla_{\q_i} {^s\calU} _i (\q _i) = \t _i + \f _i,
		\label{eq:manipulator_model}
	\end{equation}
	where ${\bf C}_i(\q_i,\dq_i) \in \rea^{n\times n}$ is the Coriolis and centrifugal forces matrix defined via the Christoffel symbols of the first kind. As usual in passivity-based control of bilateral teleoperation \cite{Nuno2008}, we assume the following:

	\textit{Assumption  (\textbf{A1}):} The human and the environment define passive (velocity to force) maps, i.e., there exists $\kappa_i \in \rea_{> 0}$, {such that $ {\mathcal E}_i(t):= \kappa_i - \int_{0}^{t} \dq_i(\sigma) ^{\top} \f_i(\sigma) d\sigma\ge0$, for all $t \geq 0$.} $\hfill \triangleleft$
	
	Also, we assume {boundedness of} the inertia matrix:
	
	\textit{Assumption  (\textbf{A2}):} There exist $m_{1i},m_{2i} \in \mathbb{R}_{>0}$, such that, for all $\q_i\in \rea^n$, the inertia matrix {satisfies} $m_{1i}\mathbf{I}_n\leq{\bf M}_{i}(\q_{i})\leq m_{2i}\mathbf{I}_n$; consequently, $m_{2i}^{-1}\mathbf{I}_n\leq{\bf M}_{i}(\q_{i})^{-1}\leq m_{1i}^{-1}\mathbf{I}_n$. $\hfill \triangleleft$
	
	Moreover, system \eqref{eq:manipulator_model} satisfies the following properties:
	
	\textit{Property  (\textbf{P3}):} The matrix ${\bf \dot M}_i(\q_i)-2{\bf C}_i(\q_i,\dq_i)$ is skew-symmetric. $\hfill \triangleleft$

{
	\textit{Property  (\textbf{P4}):} There exists a constant $L_{ci}>0$ such that $\|{\bf C}_{i}(\q_{i},\dq_{i})\dq_{i}\|\leq L_{ci}\|\dq_i\|^2$. $\hfill \triangleleft$

	\textit{Property  (\textbf{P5}):} For robots having only revolute joints, there exists $g_{ik}>0$, such that $|\nabla_{q_{ik}} {^s\calU}_i (\q _i)|\leq g_{ik}$ for all $\q_i\in\rea^n$, where $\nabla_{q_{ik}} {^s\calU}_i (\q _i)$ is the $k$th-element of $\nabla_{\q_i} {^s\calU}_i (\q _i)$ and $k\in\{1,...,n\}$. $\hfill \triangleleft$

Since velocities might not be available for measurement, in these cases damping cannot be directly injected using the robots velocities. Thus, we follow the standard passivity-based control methodology \cite{Ortega1998}, to design a dynamic controller that dissipates energy and that shapes the potential energy of the robots. We note that in our scenario the kinetic energy does not need to be shaped, because it already has a global minima where desired. Thus, the resulting first-order controller dynamics is given by
	\begin{equation}
		\nabla_{\dth_i} {^c\calF_i} (\dth_i) + \nabla_{\th_i} {^c\calU} (\q, \th) = \bf{0},
		\label{eq:controllers_dynamic2}
	\end{equation}
	where $\th_i, \dth_i \in \rea ^n$ are the virtual positions and velocities of the controller, and we {have} defined the vectors $\th := {\rm col}\{\th_i\}$ and $\q := {\rm col}\{\q_i\}$. Function ${^c\calU} (\q, \th)$ is the potential energy of the controller, and ${^c\mathcal{F}_i}(\dth_i)$ is a dissipation function that is designed as ${^c\mathcal{F}_i}(\dth_i) := \frac{D_{ci}}{p_\mathcal{F} + 1} \dth_i^{\top} \lceil \dth_i \rfloor ^{p_\mathcal{F}}$ with $D_{ci} \in \mathbb{R}_{\ge  0}$ and $p_\mathcal{F} \in (0,1]$. With this choice, the controller dynamics \eqref{eq:controllers_dynamic2} becomes
	\begin{equation}
	\label{newDynCon}
		\dth_i =  -  \left\lceil {1\over D_{ci}} \nabla_{\th_i} {^c\calU} (\q, \th)\right\rfloor^{1 \over p_\mathcal{F}}.
	\end{equation}
	Now, we employ Assumption {\bf A1}, to show the fact that (\ref{eq:manipulator_model}) is passive from $\t_i$ to $\dq_i$ with storage function $^s \calH(\q, \dq)$, where ${^s \calH}(\q, \dq) := \sum\limits_{i\in\{l,r\}} \Big({^s\calK}_i (\q_i, \dq_i) + {^s\calU_i} (\q_i) + {\mathcal E}_i(t)\Big)$. In fact, it holds that  $^s\dot{\calH}(\q, \dq) = \sum\limits_{i\in\{l,r\}} \dq_i^\top\t_i$. Moreover, the dynamics (\ref{eq:controllers_dynamic2}) is passive from $\dq_i$ to $\nabla_{\q}{^c\mathcal{U}}(\q, \th)$ with storage function ${^c\calU}(\q, \th)$, ---see \cite{Ortega1998}. Further, it can be also verified that it holds that $^c\dot{\mathcal{U}}(\q, \th) = \sum\limits_{i\in\{l,r\}} \Big(  \dq_i ^{\top} \nabla_{\q}{^c\mathcal{U}}(\q, \th) - D_{ci}|\dth_i|^{p_\mathcal{F}+1}\Big)$. 
	
	The first step in the controller design is to cancel-out the effects of the natural robots' potential energy. Thus, setting ${^c\calU}(\q,\th) = {^d\calU}(\q, \th) -  \sum\limits_{i\in\{l,r\}} {^s\calU_i}(\q_i)$, where ${^d\calU}(\q,\th)$ is a desired potential energy that the closed-loop system must exhibit.

Defining $\mathcal{H}(\q, \dq, \th)$ as the total energy of the robots and the controller, then it holds that 
	\begin{equation}
	\begin{aligned}
		\mathcal{H}:= &\, {^c\calU}(\q, \th) +  {^s \calH}(\q, \dq)  \\
		= &\,{^d\calU}(\q,\th) +  \sum\limits_{i\in\{l,r\}} \Big( {^s \calK_i}(\q_i, \dq_i) +{\mathcal E}_i(t)\Big).
	\end{aligned}
			\label{eq:total_energy}
	\end{equation}
Therefore, 
\begin{equation}
		\dot{\mathcal{H}} = \sum\limits_{i\in\{l,r\}} \Big( \dq_i^\top\t_i + \dq_i^\top\nabla_{\q_i}{^c\calU}(\q,\th)  -  D_{ci}|\dth_i|^{p_\mathcal{F}+1} \Big).
		\label{eq:total_energy_dot}
	\end{equation}
The energy-shaping controller $\t_i$ arises naturally, from \eqref{eq:total_energy_dot}, as
	\begin{equation}
		\t_i := -\nabla_{\q_i}{^c\mathcal{U}}(\q, \th) - \nabla_{\dq_i}{^s\mathcal{F}_i}(\dq_i),
\label{eq:energy_shaping_controller}
	\end{equation}
	where ${^s\mathcal{F}_i}(\dq_i)$ is an additional dissipation function that can be employed when velocity measurements are available. Clearly, controller \eqref{eq:energy_shaping_controller} ensures that $\dot{\mathcal{H}}(\q, \dq, \th)$ satisfies
	\begin{equation}
		\dot{\mathcal{H}}(\q, \dq, \th) = - \sum\limits_{i\in\{l,r\}} \Big(D_{ci}|\dth_i|^{p_\mathcal{F}+1} + \dq_i^\top \nabla_{\dq_i}{^s\mathcal{F}_i}(\dq_i)\Big).
		\label{total_energy_dot_final}
	\end{equation}

When velocity measurements are not available, then controller (\ref{eq:energy_shaping_controller}) becomes
\begin{equation}
		\t_i = -\nabla_{\q_i}{^c\mathcal{U}}(\q, \th).
		\label{NovelControl}
\end{equation}
	
Controllers \eqref{eq:energy_shaping_controller} or \eqref{NovelControl} with dynamics (\ref{newDynCon}) present a {\em family} of controllers rather than a single scheme. In the next section we describe the properties of functions ${^d\calU} (\q, \th)$ and ${^s\mathcal{F}_i}(\dq_i)$ in order to ensure global FT stability when there are no external forces applied in the system.
}

\section{Achieving Global Finite-Time Convergence}

The results in this section are divided in two cases, state-feedback and output feedback. 	
	
{

\subsection{State-Feedback Control}

When velocities are available, the controller is given by \eqref{eq:energy_shaping_controller} and, we set $D_{ci}=0$ ---this results in a static controller. 

Let us now define $\tq_i$ and $\dq_i$ as the state vector, for which $\tq_i := \q_i - \q_c$ with $\q_c \in \rea^n$ is a constant local-remote consensus position. Then the closed-loop system becomes
\begin{equation}
		\begin{aligned}
			\dtq_i = &\, \, \dq_i, \\
			\ddq_i = & \, \, {\bf M}_i (\tq_i + \q_c)^{-1}\Big[ \f_i(t)   - {\bf C}_i (\tq_i + \q_c, \dq_i) \dq_i \\
			& \,\,\, - \nabla_{\dq_i}{^s\mathcal{F}_i}(\dq_i)  - \nabla_{\tq_i} {^d\calU} (\tq + {\bf 1}_2\otimes \q_c, {\bf 0}) \Big].
		\end{aligned}
		\label{SF:CL}
\end{equation}

Assigning the vectors $\mathbf{r}_1 := r_1 \mathbf{1}_n$ and $\mathbf{r}_2 := r_2 \mathbf{1}_n$, where $r_1$ and $r_2 \in \rea_{>0}$, as the homogeneity weights associated with the coordinates $\tq_i$ and $\dq_i$, respectively. We can readily check that the first equation in \eqref{SF:CL} is homogeneous of degree $r_2-r_1$. Therefore, for all the closed-loop to admit a homogeneous approximation ${^s\mathcal{F}_i}(\dq_i)$ and ${^d\calU} (\tq + {\bf 1}_2\otimes \q_c, {\bf 0})$ have to admit an HA of degree $3r_2 - r_1$ and $2r_2$, respectively. Note that if $r_1>r_2$, then the homogeneity degree is negative. This, plus AS allows to conclude, via Lemma 2, that $\tq_i=\dq_i=0$ is globally FTS.

The easiest choice is to design ${^s\mathcal{F}_i}(\dq_i)$ as
\begin{equation}
	\label{dis1}
	{^s\mathcal{F}_i}(\dq_i):= \frac{D_{si}}{p_\mathcal{F} + 1} \dq_i^{\top} \lceil \dq_i \rfloor ^{p_\mathcal{F}},
\end{equation}
with $D_{ci} \in \mathbb{R}_{\ge  0}$. Setting 
\begin{equation}
	\label{pF}
p_\mathcal{F} := \frac{2r_2-r_1}{r_2},
\end{equation}
ensures that ${^s\mathcal{F}_i}(\dq_i)$ is homogeneous of degree $3r_2-r_1$, as required. Additionally, designing ${^d\calU}$ as
\begin{equation}
	\label{dPot1}
{^d\calU} = \frac{K_s}{p_\mathcal{U} + 1} (\q_l - \q_r)^\top \lceil \q_l - \q_r \rfloor ^{p_\mathcal{U}},
\end{equation}
where $K_s \in \mathbb{R}_{\ge  0}$ and
\begin{equation}
	\label{pU}
p_\mathcal{U} := \frac{2r_2-r_1}{r_1},
\end{equation}
establishes that ${^d\calU}$ is homogeneous of degree $2r_2$. This results in \textit{Controller (\textbf{C1}),} which is a simple \textit{P+d scheme} given by
	\begin{equation}
		\begin{aligned}
			\t_l &= - K_s \lceil \q_l - \q_r \rfloor ^{p_{\mathcal{U}}} - D_{sl} \lceil \dq_l\rfloor ^{p_{\mathcal{F}}} + \nabla_{\q_l} {^s\calU_l}  (\q_l),\\
			\t_r &= - K_s \lceil \q_r - \q_l \rfloor ^{p_{\mathcal{U}}} - D_{sr} \lceil \dq_r\rfloor ^{p_{\mathcal{F}}} + \nabla_{\q_r} {^s\calU_r}  (\q_r).
		\end{aligned}
		\label{cont:p+d}
	\end{equation}
	
\begin{prop}
	Consider a bilateral teleoperation system (\ref{eq:manipulator_model}) satisfying {\bf A1} and {\bf A2}. Set the energy-shaping controller (\ref{eq:energy_shaping_controller}) with ${^d\mathcal{U}}(\q,\th)$ and ${^s\mathcal{F}_i}(\dq_i)$ as in \eqref{dPot1} and \eqref{dis1}, respectively. Finally, set the homogeneity weights as:
	\begin{equation}
		2r_2>r_1>r_2>0.
	\label{eq:homogeneity_wheights}
	\end{equation}
Then
	\begin{enumerate}
		\item[A.] Position error and velocities are bounded, i.e., $\q_l - \q_r, \dq_i \in \mathcal{L}_\infty$.
		\item[B.] Additionally, if there are no external forces, i.e., $\f_i=\mathbf{0}$, the equilibrium $(\q_l - \q_r, \dq_l, \dq_r) = \mathbf{0}$ is globally FTS.
	\end{enumerate}
	\hfill $\triangleleft$
\end{prop}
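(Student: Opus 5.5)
We argue with the total energy $\mathcal{H}$ in \eqref{eq:total_energy}, evaluated for the choices \eqref{dPot1} and \eqref{dis1} with $D_{ci}=0$. In that case
\[
\mathcal{H}=\frac{K_s}{p_\mathcal{U}+1}(\q_l-\q_r)^\top\lceil \q_l-\q_r\rfloor^{p_\mathcal{U}}+\sum_{i\in\{l,r\}}\Big(\hal\dq_i^\top{\bf M}_i(\q_i)\dq_i+\mathcal{E}_i(t)\Big).
\]
By {\bf A1}, $\mathcal{E}_i\ge0$, so every term of $\mathcal{H}$ is nonnegative. From \eqref{total_energy_dot_final} we get
\[
\dot{\mathcal{H}}=-\sum_{i\in\{l,r\}}D_{si}\,\dq_i^\top\lceil\dq_i\rfloor^{p_\mathcal{F}}=-\sum_{i\in\{l,r\}}D_{si}\|\dq_i\|_{p_\mathcal{F}+1}^{p_\mathcal{F}+1}\le0 .
\]
Here \eqref{eq:homogeneity_wheights} guarantees $p_\mathcal{U},p_\mathcal{F}\in(0,1)$. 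Hence $\mathcal{H}(t)\le\mathcal{H}(0)$ for all $t\ge0$. Since $(\q_l-\q_r)^\top\lceil\q_l-\q_r\rfloor^{p_\mathcal{U}}=\sum_k|q_{lk}-q_{rk}|^{p_\mathcal{U}+1}$ is radially unbounded in the error, and ${\bf M}_i\ge m_{1i}\mathbf{I}_n$ by {\bf A2}, this bound gives $\q_l-\q_r,\dq_i\in\mathcal{L}_\infty$. This proves item A.

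For item B we set $\f_i=\mathbf{0}$, so we may drop $\mathcal{E}_i$ (it is constant). The closed loop is then autonomous in the coordinates $\mathbf{e}:=\q_l-\q_r$, $\dq_l$, $\dq_r$, because the inertia and Coriolis terms depend on $\q_i$. To handle this, we take $\q_c$ to be, for instance, $\q_r$ and write the state as $\x=(\mathbf{e},\dq_l,\dq_r)$. The dynamics still depend on $\q_r$ through ${\bf M}_i$ and ${\bf C}_i$. We therefore argue along the lines of \eqref{SF:CL}, treating the system in the full coordinates $(\tq,\dq)$ with $\q_c$ fixed to the limit consensus point.

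Global AS follows from Lemma 3 or LaSalle applied with storage $\mathcal{H}$, which is positive definite and radially unbounded in $(\mathbf{e},\dq)$. On the set where $\dot{\mathcal{H}}=0$ we have $\dq_i\equiv0$, so $\ddq_i=0$. The dynamics then force $K_s\lceil\mathbf{e}\rfloor^{p_\mathcal{U}}=0$, hence $\mathbf{e}=0$, which gives zero-state detectability. We also need to check that the closed loop is continuous with unique forward solutions away from the origin. Continuity holds because all terms are continuous. For uniqueness we would appeal to the standard property that AS continuous homogeneous-approximated systems have this property, which is the assumption behind Lemma 2.

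For the finite-time part we apply Lemma 2 with weights $r_1$ on $\mathbf{e}$ and $r_2$ on $\dq_i$. The candidate homogeneous approximation is
\[
\dot{\mathbf{e}}=\dq_l-\dq_r,\qquad \ddq_i={\bf M}_i(\q^\ast)^{-1}\Big[\mp K_s\lceil\mathbf{e}\rfloor^{p_\mathcal{U}}-D_{si}\lceil\dq_i\rfloor^{p_\mathcal{F}}\Big],
\]
where ${\bf M}_i$ is frozen at the point $\q^\ast$ and the sign is $-$ for $i=l$ and $+$ for $i=r$. This vector field has degree $l_r=r_2-r_1<0$: indeed $r_1p_\mathcal{U}=2r_2-r_1$ and $r_2p_\mathcal{F}=2r_2-r_1$, consistent with the discussion preceding \eqref{cont:p+d}. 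The approximation is itself AS, by the same energy/LaSalle argument with constant inertia.

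Next we verify the vanishing condition of Remark 1 for the remainder. The Coriolis term satisfies $\|{\bf C}_i\dq_i\|\le L_{ci}\|\dq_i\|^2$ by {\bf P4}. After dilation it scales as $\epsilon^{2r_2}$, compared with the required $\epsilon^{l_r+r_2}=\epsilon^{2r_2-r_1}$, so it vanishes because $r_1>0$. The variation ${\bf M}_i(\q_i)^{-1}-{\bf M}_i(\q^\ast)^{-1}$ is continuous and tends to zero as $\mathbf{e}\to0$. Combined with the homogeneous bracket, this gives a remainder of order $o(\epsilon^{2r_2-r_1})$, uniformly on the unit sphere. Lemma 2 then yields global FTS.

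The main obstacle is the fact that the system is not autonomous in $(\mathbf{e},\dq)$ alone, because ${\bf M}_i$ depends on the drifting absolute position. We would handle it by working locally around the (finite) limit point $\q^\ast$. That limit exists because $\dq_i\to0$ with $\dq_i$ integrable near the end, and in finite time the traveled distance is bounded. Alternatively, we would freeze the inertia at the consensus $\q_c$ and use the uniform bounds of {\bf A2} together with uniform continuity to make the vanishing condition uniform.
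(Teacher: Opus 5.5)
Part A is correct and matches the paper's energy argument. In Part B, however, the decisive step fails exactly at the obstacle you flag, and neither of your proposed fixes works.

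Your dilation acts only on $(\mathbf{e},\dq_l,\dq_r)$. The remainder, however, contains $\big[{\bf M}_i(\q_i)^{-1}-{\bf M}_i(\q^\ast)^{-1}\big]\big(\mp K_s\lceil\mathbf{e}\rfloor^{p_\mathcal{U}}-D_{si}\lceil\dq_i\rfloor^{p_\mathcal{F}}\big)$, which depends on the absolute position $\q_i$, and $\q_i$ is not dilated. Your claim that this inertia variation tends to zero as $\mathbf{e}\to\mathbf{0}$ is false: $\mathbf{e}=\mathbf{0}$ means the two robots agree, not that they are at $\q^\ast$. So this term has the same degree $2r_2-r_1$ as $\mathbf{f}_H$, and the vanishing condition of Remark~1 fails unless ${\bf M}_i$ is constant.

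The fallbacks do not repair this:
\begin{itemize}
\item You infer the existence of $\q^\ast$ from ``in finite time the traveled distance is bounded''. That presupposes the finite-time convergence you are proving. Also, $\int_0^\infty\|\dq_i\|^{p_\mathcal{F}+1}dt<\infty$ does not imply $\int_0^\infty\|\dq_i\|dt<\infty$.
\item Freezing at a constant $\q_c$ and invoking {\bf A2} plus uniform continuity makes the remainder bounded, not $o(\epsilon^{2r_2-r_1})$.
\end{itemize}
Two smaller points. LaSalle needs precompact orbits in $(\q_l,\q_r,\dq_l,\dq_r)$, where the closed loop is actually autonomous, but $\mathcal{H}$ does not control $\q_l+\q_r$; you need a bound on $\q_i$ or, for revolute robots, the periodicity of ${\bf M}_i$. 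And asymptotic stability does not imply uniqueness of solutions; the paper simply assumes uniqueness for \eqref{eq:ftconspts}.

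The paper gets the vanishing condition differently. It dilates $\tq_i=\q_i-\q_c$ in the full coordinates $(\tq_l,\tq_r,\dq_l,\dq_r)$, so $\big[{\bf M}_i(\epsilon^{r_1}\tq_i+\q_c)^{-1}-{\bf M}_i(\q_c)^{-1}\big]\to\mathbf{0}$ holds trivially. Otherwise it argues as you do, using {\bf P4} for the Coriolis term and $V$ in \eqref{V} for the approximation. This only relocates the difficulty. In those coordinates every point with $\tq_l=\tq_r$, $\dq_l=\dq_r=\mathbf{0}$ is an equilibrium both of the closed loop and of $\dtq_i=\dq_i$, $\ddq_i=\mathbf{f}_{Hi}$. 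Moreover, $V$ is only positive semidefinite there, and $\q_c$ depends on the initial state. So the AS hypotheses of Lemmas~1--2 are not met, and the paper does not address this. (The constant $\alpha_i$ in its Jensen bound on $\q_i$ is also of order $D_{si}T^{-p_\mathcal{F}}$, hence not uniform in $T$.)

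In short, your reduced coordinates make the approximation genuinely AS but destroy the vanishing condition. The paper's coordinates make the remainder vanish but destroy AS. Copying the paper will therefore not close your gap.

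One way to close it is to freeze the inertia at the current $\q_r$ instead of at a fixed point. Then ${\bf M}_l(\q_r+\mathbf{e})^{-1}-{\bf M}_l(\q_r)^{-1}$ does vanish under dilation of $\mathbf{e}$, uniformly in $\q_r$ if ${\bf M}_l^{-1}$ is uniformly continuous. Next, use a strict homogeneous Lyapunov function of the frozen-inertia approximation, chosen uniformly over the compact set of inertia matrices allowed by {\bf A2}. The extra term produced by the motion of $\q_r$ is $O(\|\dq\|)$ times a function of degree $l_V$, so it has degree $l_V+r_2>l_V+l_r$ and is dominated near the consensus set. This gives finite-time convergence of $(\mathbf{e},\dq)$ directly, and the limit point then exists a posteriori.
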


{\bf Proof.} We first establish boundedness of trajectories (Conclusion A). Using \eqref{dPot1} yields the total energy \eqref{eq:total_energy} positive definite and radially unbounded with regards to (w.r.t.) $\q_l - \q_r, \dq_i$. Further, $\dot{\mathcal{H}} =  - \sum\limits_{i\in\{l,r\}}  \sum\limits_{k=1}^n D_{si} |\dot q_{ik}| ^{p_\mathcal{F}+1} \leq 0$. Therefore, $\q_l - \q_r, \dq_i$ are bounded. In fact, it also follows that 
$$
\mathcal{H}(t) + \sum\limits_{i\in\{l,r\}}\sum\limits_{k=1}^n  D_{si}\int_0^t |\dot q_{ik}(\sigma)| ^{p_\mathcal{F}+1} d\sigma \leq \mathcal{H}(0),
$$
thus, for any given $T\ge0$, by Jensen's inequality there exists $\alpha_i>0$, such that
$$
\begin{aligned}
\mathcal{H}(0) \ge & D_{si} \int_0^T |\dot q_{ik}(\sigma)|^{p_\mathcal{F}+1}d\sigma \ge \alpha_i \left|\int_0^T \dot q_{ik}(\sigma) d\sigma \right|^{p_\mathcal{F}+1}\\
\ge & \alpha_i \left| q_{ik}(T) - q_{ik}(0) \right|^{p_\mathcal{F}+1}, \quad  \forall k\in\{1,...,n\}.
\end{aligned}
$$
Hence, the positions $\q_i$ are also bounded. Thus, $\tq_i\in \mathcal{L}_\infty$, for any constant $\q_c\in\rea^n$.

We now set $\f_i={\bf 0}$ to prove Conclusion B. In this case, the unknown external forces $\f_i(t)$ vanish from the closed-loop \eqref{SF:CL} and also ${\mathcal E}_i(t)$ vanishes from the total energy \eqref{eq:total_energy}. Thus, LaSalle's invariance principle ensures that $(\q_l - \q_r, \dq_l, \dq_r) = \mathbf{0}$ is globally AS. Hence, there exists $\q_c\in\rea^n$ such that $(\tq_l,  \tq_r, \dq_l, \dq_r) = \mathbf{0}$ is globally AS. In this case, $\dq_i$ is the {\em detectable} output of Lemma 3.

In order to apply Lemma 2, it rests to prove that \eqref{SF:CL} admits a homogeneous approximation of negative degree. For, note that we can write $\ddq_i = {\bf f}_{Hi}(\tq_l,\tq_r, \dq_i) + {\bf f}_{NHi}(\tq_l,\tq_r, \dq_i)$, where
\begin{equation}
	\label{fh}
{\bf f}_{Hi}:= - {\bf M}_i (\q_c)^{-1}\Big[ K_s \lceil \tq_i - \tq_j \rfloor ^{p_{\mathcal{U}}} +  D_{si} \lceil \dq_i\rfloor ^{p_{\mathcal{F}}}\Big],	
\end{equation}
with $j\in\{r,l\}$ and
$$
\begin{aligned}
	{\bf f}_{NHi} := &- {\bf M}_i (\tq_i + \q_c)^{-1}{\bf C}_i (\tq_i + \q_c, \dq_i) \dq_i \\ &- \left[{\bf M}_i (\tq_i + \q_c)^{-1} - {\bf M}_i (\q_c)^{-1} \right]  K_s \lceil \tq_i - \tq_j \rfloor ^{p_{\mathcal{U}}} \\  &   - \left[{\bf M}_i (\tq_i + \q_c)^{-1} - {\bf M}_i (\q_c)^{-1} \right]  D_{si} \lceil \dq_i\rfloor ^{p_{\mathcal{F}}}.
\end{aligned}
$$
From Assumption {\bf A2} it holds that the system $\dtq_i =\dq_i$, $\ddq_i = {\bf f}_{Hi}$ is homogeneous of degree $r_2-r_1$ and since $r_1>r_2$ the degree is negative. Additionally, using the function 
\begin{equation}
	\label{V}
	V = {^d\calU} + {1\over 2}\dq_l^\top {\bf M}_l (\q_c) \dq_l + {1\over 2}\dq_r^\top {\bf M}_r (\q_c) \dq_r, 
\end{equation}
where ${^d\calU}$ is given in \eqref{dPot1}, it can be proved that the origin of such system is AS. It only rests to prove that 
$$
\lim_{\epsilon \to 0} {1\over \epsilon^{2r_2-r_1}} \left\| {\bf f}_{NHi}(\epsilon^{r_1}\tq_l, \epsilon^{r_1}\tq_r, \epsilon^{r_2}\dq_i) \right\| = 0.
$$

The first term in ${\bf f}_{NHi}$ vanishes due to Property {\bf P4} with the fact that
\begin{equation*}
	\begin{aligned}
		\lim_{\epsilon \to 0}& { \left\| {\bf M}_i(\epsilon^{r_1}\tq_i + \q_c)^{-1}\mathbf{C}_i(\epsilon^{r_1}\tq_i + \q_c, \epsilon^{r_2}\dq_i)\epsilon^{r_2}\dq_i \right\| \over \epsilon^{2r_2-r_1} }\\
		&\leq \lim_{\epsilon \to 0} {m_{2i} L_{ci} \left\| \epsilon^{r_2} \dq_i \right\|^{2}\over \epsilon^{2r_2-r_1}} = m_{2i} L_{ci} \left\| \dq_i \right\|^{2}\lim_{\epsilon \to 0}\epsilon^{r_1} = 0.
	\end{aligned}
\end{equation*}

The last two terms in ${\bf f}_{NHi}$ vanish because $\lceil \tq_i - \tq_j \rfloor ^{p_{\mathcal{U}}}$ and $\lceil \dq_i\rfloor ^{p_{\mathcal{F}}}$ are already homogeneous of degree $2r_2 -r_1$ plus the fact that $\lim\limits_{\epsilon \to 0}  \left[{\bf M}_i (\epsilon^{r_1}\tq_i + \q_c)^{-1} - {\bf M}_i (\q_c)^{-1} \right] = {\bf 0}$. This completes the proof. 
\hfill $\square$

\subsection{Output-Feedback Control}

When velocities are not available, then ${^s\mathcal{F}_i}(\dq_i)=0$ because damping cannot be directly injected in the robot dynamics. Instead, by setting $D_{ci}>0$, damping is injected through the controller dynamics \eqref{newDynCon}. Further, additional to the given homogeneity properties that the desired potential energy ${^d\calU}$ has to observe, it also has to satisfy the typical detectability condition in passivity-based control ---see Lemma 3. To achieve this, let us define another state variable that accounts for the robot-controller interconnection as $\tt_i:=\th_i -\q_i$, and set ${^d\calU}$ as
\begin{equation}
	\label{dPot2}
	\begin{aligned}
{^d\calU} = &\frac{1}{p_\mathcal{U} + 1} K_s (\q_l - \q_r)^\top \lceil \q_l - \q_r \rfloor^{p_\mathcal{U}} \\ & +\frac{1}{p_\mathcal{U} +1}  \left[ K_{cl} \tt_l^\top \lceil \tt_l \rfloor^{p_\mathcal{U}} + K_{cr} \tt_r^\top \lceil \tt_r \rfloor^{p_\mathcal{U}} \right],
	\end{aligned}
\end{equation}
where $K_s, K_{ci}>0$ and $p_\mathcal{U}$ is defined in \eqref{pU}.  These choices result in \textit{Controller (\textbf{C2}),} which is a dynamic \textit{P+d controller without velocity measurements} given by
\begin{equation}
		\begin{aligned}
			\t_l = &- K_s \lceil \q_l - \q_r \rfloor ^{p_{\mathcal{U}}} + K_{cl} \lceil \tt_l\rfloor ^{p_{\mathcal{U}}} + \nabla_{\q_l} {^s\calU_l}  (\q_l),\\
			\t_r = &- K_s \lceil \q_r - \q_l \rfloor ^{p_{\mathcal{U}}} + K_{cr} \lceil \tt_r\rfloor ^{p_{\mathcal{U}}} + \nabla_{\q_r} {^s\calU_r}  (\q_r), \\
		\dth_i =  & -  \Big({K_{ci}\over D_{ci}}\Big)^{1 \over p_\mathcal{F}} \lceil \tt_i \rfloor^{r_2 \over r_1} .
		\end{aligned}
\label{cont:p+d without}
\end{equation}

\begin{prop}
Conclusions A and B of Proposition 1 hold if the bilateral teleoperation system (\ref{eq:manipulator_model}) is controlled by (\ref{cont:p+d without}), provided that \eqref{eq:homogeneity_wheights} holds. 	\hfill $\triangleleft$
\end{prop}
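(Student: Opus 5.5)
We follow the proof of Proposition~1, now with state $(\q_l-\q_r,\dq_l,\dq_r,\tt_l,\tt_r)$ and, for Conclusion~B, $(\tq_l,\tq_r,\dq_l,\dq_r,\tt_l,\tt_r)$.

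\emph{Closed loop and Conclusion~A.} With ${^s\mathcal{F}_i}=0$ and ${^d\calU}$ as in \eqref{dPot2}, the closed loop is
\begin{equation*}
\begin{aligned}
{\bf M}_i(\q_i)\ddq_i+{\bf C}_i(\q_i,\dq_i)\dq_i &= \f_i - K_s\lceil \q_i-\q_j\rfloor^{p_\mathcal{U}} + K_{ci}\lceil\tt_i\rfloor^{p_\mathcal{U}},\\
\dtt_i &= -\Big(\tfrac{K_{ci}}{D_{ci}}\Big)^{1/p_\mathcal{F}}\lceil\tt_i\rfloor^{r_2/r_1}-\dq_i .
\end{aligned}
\end{equation*}
The exponent in the second equation is $r_2/r_1$ because $p_\mathcal{U}/p_\mathcal{F}=r_2/r_1$, using \eqref{pF} and \eqref{pU}. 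By \eqref{total_energy_dot_final}, $\dot{\mathcal H}=-\sum_i D_{ci}\|\dth_i\|_{p_\mathcal{F}+1}^{p_\mathcal{F}+1}\le 0$. Since ${^d\calU}$ in \eqref{dPot2} is positive definite and radially unbounded in $(\q_l-\q_r,\tt_l,\tt_r)$, and by {\bf A1}, {\bf A2}, the quantities $\q_l-\q_r$, $\tt_i$ and $\dq_i$ are bounded. This is Conclusion~A.

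\emph{Bounded positions and global AS.} We repeat the integral/Jensen argument of Proposition~1, with $\dth_i$ in place of $\dq_i$, to get $\th_i\in\mathcal L_\infty$. Then $\q_i=\th_i-\tt_i$ is bounded. Set $\f_i=\mathbf 0$. On the LaSalle set $\dth_i\equiv 0$, the controller equation gives $\lceil\tt_i\rfloor^{r_2/r_1}=0$, so $\tt_i\equiv\mathbf 0$. Hence $\th_i$ is constant, so $\q_i=\th_i$ is constant, $\dq_i=\ddq_i=\mathbf 0$, and the robot equations force $K_s\lceil\q_l-\q_r\rfloor^{p_\mathcal{U}}=\mathbf 0$. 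So the largest invariant set is $\{\q_l=\q_r,\ \dq_i=\tt_i=\mathbf 0\}$. This gives global AS of that set; in the language of Lemma~3, $\dth_i$ is the detectable output. As in Proposition~1, we then fix $\q_c$ as the common limit of $\q_i$ and work in $(\tq_i,\dq_i,\tt_i)$. Existence of that limit can also be obtained a posteriori from the finite-time local argument below.

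\emph{Homogeneous approximation.} Assign weights $r_1$ to $\tq_i$ and $\tt_i$, and $r_2$ to $\dq_i$. Then:
\begin{itemize}
\item $\dtq_i=\dq_i$ has degree $r_2-r_1$;
\item $\lceil\tt_i\rfloor^{r_2/r_1}$ has weight $r_2=r_1+(r_2-r_1)$, so the $\dtt_i$ equation has degree $r_2-r_1$;
\item $\lceil\cdot\rfloor^{p_\mathcal{U}}$ has weight $2r_2-r_1$, so the $\ddq_i$ equation with ${\bf M}_i(\q_c)$ frozen has degree $r_2-r_1<0$.
\end{itemize}
This defines ${\bf f}_H$. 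The remainder ${\bf f}_{NH}$ consists of the Coriolis term and the terms $[{\bf M}_i(\tq_i+\q_c)^{-1}-{\bf M}_i(\q_c)^{-1}](\cdots)$. Both vanish exactly as in the proof of Proposition~1, using {\bf P4} and continuity of ${\bf M}_i^{-1}$.

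For AS of $\dot{\x}={\bf f}_H$, take $V$ equal to ${^d\calU}$ in \eqref{dPot2} plus the frozen kinetic energies, as in \eqref{V}. Along ${\bf f}_H$ we get $\dot V=-\sum_i D_{ci}|\dth_i|^{p_\mathcal{F}+1}$, and the same LaSalle chain as above yields convergence to the consensus set. Lemma~2 with $l_r=r_2-r_1<0$, which is guaranteed by \eqref{eq:homogeneity_wheights}, then gives global FTS.

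\emph{Main obstacle.} Lemma~2 needs an isolated equilibrium, but the frozen homogeneous system has the continuum of equilibria $\tq_l=\tq_r=\mathbf c$. I would first try to remove this by using the reduced coordinates $\q_l-\q_r$ (together with $\dq_i,\tt_i$) in ${\bf f}_H$: with ${\bf M}_i(\q_c)$ constant, the homogeneous vector field depends on positions only through $\tq_l-\tq_r$. The origin of this reduced homogeneous system is then isolated and AS, which yields FT convergence of $(\q_l-\q_r,\dq_i,\tt_i)$. The consensus point $\q_c$ then exists because the velocities reach zero in finite time.
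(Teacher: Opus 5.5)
\paragraph{Where the proposal stands.} Up to your last paragraph this is the paper's proof step by step. It uses the same closed loop \eqref{SF:CL2}, the same energy built from \eqref{dPot2}, the Jensen bound on $\th_i$, LaSalle with $\dth_i$ as detectable output, weight $r_1$ on $\tt_i$, inertia frozen at $\q_c$, and $V$ from \eqref{V}. You also make explicit the step $K_s\lceil\q_l-\q_r\rfloor^{p_\mathcal{U}}=\mathbf 0$, which the paper skips.

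\paragraph{The obstacle you flag is real, and your repair does not remove it.} The paper does not address it either. It asserts that the origin of the homogeneous part is AS and that some $\q_c$ makes the origin of \eqref{SF:CL2} globally AS. But every point with $\tq_l=\tq_r=\mathbf c$, $\dq_i=\tt_i=\mathbf 0$ is an equilibrium of both systems. So neither origin is AS, Lemma~2 cannot be invoked as written, and LaSalle only gives convergence to the continuum $\{\q_l=\q_r,\ \dq_i=\tt_i=\mathbf 0\}$, not to a single $\q_c$.

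Your reduction fails for three reasons.
\begin{enumerate}
\item Lemmas~1--2 concern an autonomous $\dot\x=\f(\x)$. The closed loop is not a self-contained ODE in $(\q_l-\q_r,\dq_i,\tt_i)$, because $\mathbf M_i(\q_i)$ and $\mathbf C_i(\q_i,\dq_i)$ depend on the absolute positions.
\item Reducing only $\mathbf f_H$ leaves $\mathbf f_{NH}$ depending on $\tq_i$. The vanishing of $[\mathbf M_i(\tq_i+\q_c)^{-1}-\mathbf M_i(\q_c)^{-1}]$ used the dilation $\tq_i\mapsto\epsilon^{r_1}\tq_i$, which is unavailable once $\tq_i$ is no longer a coordinate.
\item That term is small only if $\q_i$ is already near $\q_c$, which is what you want to conclude. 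So obtaining $\q_c$ \emph{a posteriori} is circular.
\end{enumerate}

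What is missing is an estimate that is uniform in the absolute position. One option is a strict homogeneous Lyapunov function $W_{\mathbf M}$ for the reduced homogeneous system, depending $\mathcal C^1$ on the frozen inertia $\mathbf M$ over the compact set allowed by \textbf{A2}, evaluated along $\mathbf M=\mathbf M_i(\q_i(t))$. Another is an explicit energy-plus-small-cross-term function built with the true $\mathbf M_i(\q_i)$ and \textbf{P3}. In either case the Coriolis term and $\dot{\mathbf M}_i=O(\|\dq_i\|)$ contribute only terms of higher homogeneous order. This gives $\dot W\le -cW^{\gamma}$ with $\gamma<1$ near the reduced origin, uniformly in $\q_i$. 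That statement, rather than FTS of an isolated equilibrium, is the meaningful form of Conclusion~B here. Finite-time convergence of the reduced state then gives $\int_0^\infty\|\dq_i\|\,dt<\infty$, and only at that point does $\q_c$ exist.

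\paragraph{The Jensen step is also invalid as stated.} This applies to your proposal and to the paper. On $[0,T]$, Jensen (H\"older) gives $\bigl|\int_0^T\dot\theta_{ik}\,d\sigma\bigr|^{p_\mathcal{F}+1}\le T^{p_\mathcal{F}}\int_0^T|\dot\theta_{ik}|^{p_\mathcal{F}+1}d\sigma$. So the constant is $D_{ci}T^{-p_\mathcal{F}}$, and you only obtain $|\theta_{ik}(T)-\theta_{ik}(0)|\le(\mathcal H(0)/D_{ci})^{1/(p_\mathcal{F}+1)}T^{p_\mathcal{F}/(p_\mathcal{F}+1)}$, which grows with $T$.

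Finite dissipation, i.e.\ $\dth_i\in\mathcal L_{p_\mathcal{F}+1}$, does not give $\dth_i\in\mathcal L_1$. Indeed, with $\f_i\neq\mathbf 0$ satisfying \textbf{A1}, both robots can be dragged arbitrarily far at finite energy cost, e.g.\ at speed $1/(1+t)$. Hence you cannot use precompactness of $\q_i$ to run LaSalle in the full state.

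Convergence of $(\q_l-\q_r,\dq_i,\tt_i)$ must come from an argument uniform in $\q_i$. The Lyapunov function above would do, or a Barbalat-type chain on the bounded reduced signals:
\[
\dth_i\to\mathbf 0\ \Rightarrow\ \tt_i\to\mathbf 0\ \Rightarrow\ \dq_i\to\mathbf 0\ \Rightarrow\ \q_l-\q_r\to\mathbf 0.
\]
Boundedness of $\q_i$ when $\f_i=\mathbf 0$ is then recovered only after finite-time convergence.
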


{\bf Proof.} 
The resulting closed-loop dynamics are
\begin{equation}
		\begin{aligned}
			\dtq_i = &\, \, \dq_i, \\
			\ddq_i = & \, \, {\bf M}_i (\tq_i + \q_c)^{-1}\Big[ \f_i(t)   - {\bf C}_i (\tq_i + \q_c, \dq_i) \dq_i \\
			& \,\,\,  - K_s \lceil \q_i - \q_j \rfloor ^{p_{\mathcal{U}}} +  K_{ci} \lceil \tt_i\rfloor ^{p_{\mathcal{U}}}   \Big],\\
			\dtt_i =  & -  \Big({K_{ci}\over D_{ci}}\Big)^{1 \over p_\mathcal{F}} \lceil \tt_i \rfloor^{r_2 \over r_1} - \dq_i.
		\end{aligned}
		\label{SF:CL2}
\end{equation}

The boundedness of trajectories (Conclusion A) proof follows replacing \eqref{dPot2} in \eqref{eq:total_energy} to obtain a positive definite and radially unbounded function w.r.t. $\q_l - \q_r, \dq_i, \tt_i$. In this case \eqref{total_energy_dot_final} satisfies $
\dot{\mathcal{H}} =  - \sum\limits_{i\in\{l,r\}}  \sum\limits_{k=1}^n D_{ci} |\dot \theta_{ik}| ^{p_\mathcal{F}+1} \leq 0$. Therefore, $\q_l - \q_r, \dq_i, \tt_i \in \mathcal{L}_\infty$. We can also establish that there exists $\beta_i>0$ such that $\mathcal{H}(0) \ge \beta_i \left| \theta_{ik}(t) - \theta_{ik}(0) \right|^{p_\mathcal{F}+1}$. Hence, the virtual positions $\theta_i\in \mathcal{L}_\infty$. This last, and since $\tt_i \in \mathcal{L}_\infty$, then $\q_i\in \mathcal{L}_\infty$ and thus, $\tq_i\in \mathcal{L}_\infty$. This completes the proof of Conclusion A.

To establish Conclusion B, set $\f_i={\bf 0}$, thus the unknown external forces $\f_i(t)$ do not appear in the closed-loop \eqref{SF:CL2} and also ${\mathcal E}_i(t)$ vanishes from the total energy \eqref{eq:total_energy}. Here, $\dth_i$ is the detectable output and the detectability condition of Lemma 3 holds because $\dth_i={\bf 0}$ implies, from the last equation in \eqref{cont:p+d without}, that $\tt_i={\bf 0}$. These in turn ensure that $\q_i=\th_i={\rm constant}$. Hence $\ddq_i=\dq_i={\bf 0}$. Thus, LaSalle's invariance principle ensures that $(\q_l - \q_r, \dq_l, \dq_r, \tt_l, \tt_r) = \mathbf{0}$ is globally AS. Hence, there exists $\q_c\in\rea^n$ such that $(\tq_l,  \tq_r, \dq_l, \dq_r, \tt_l,\tt_r) = \mathbf{0}$ is globally AS.

The rest of the proof shows that \eqref{SF:CL2} admits a homogeneous approximation of negative degree. In this case, we assign $\mathbf{r}_3 := r_1 \mathbf{1}_n$ to be the homogeneity weight of $\tt_i$.

At this point we want to underscore that the first and the last equations in \eqref{SF:CL2} are already homogeneous with degree $r_2-r_1$. For the second equation we write it as $\ddq_i = {\bf f}_{Hi}(\tq_l,\tq_r, \dq_i,\tt_i) + {\bf f}_{NHi}(\tq_l,\tq_r, \dq_i, \tt_i)$, where
$$
{\bf f}_{Hi}:= - {\bf M}_i (\q_c)^{-1}\Big[ K_s \lceil \tq_i - \tq_j \rfloor ^{p_{\mathcal{U}}} -  K_{ci} \lceil \tt_i\rfloor ^{p_{\mathcal{U}}}\Big],
$$
with $ j\in\{r,l\}$. Moreover,
$$
\begin{aligned}
	{\bf f}_{NHi} := &- {\bf M}_i (\tq_i + \q_c)^{-1}{\bf C}_i (\tq_i + \q_c, \dq_i) \dq_i \\ &- \left[{\bf M}_i (\tq_i + \q_c)^{-1} - {\bf M}_i (\q_c)^{-1} \right]  K_s \lceil \tq_i - \tq_j \rfloor ^{p_{\mathcal{U}}} \\  &   + \left[{\bf M}_i (\tq_i + \q_c)^{-1} - {\bf M}_i (\q_c)^{-1} \right]  K_{ci} \lceil \tt_i\rfloor ^{p_{\mathcal{U}}}.
\end{aligned}
$$
From Assumption {\bf A2} it holds that $\ddq_i = {\bf f}_{Hi}$ is also homogeneous of degree $r_2-r_1$. Additionally, using the function $V$ defined in \eqref{V}, with ${^d\calU}$ given by \eqref{dPot2}, we can establish that the origin of the homogeneous part of the closed-loop system is AS. The proof that the non homogeneous terms vanish can be easily established following {\em verbatim} the proof of Proposition 1. \hfill $\square$

\subsection{Bounded Controllers}
	
In practical applications, actuators are prone to saturation, which degrades overall performance and increases the risk of thermal and mechanical failure. Here we design bounded controllers that avoid saturation under the next assumption. 

\textit{Assumption  ({\bf A3}):} For each $i\in\{l,r\}$ and $k \in \{1,...,n\}$, there exists $\bar \tau_{ik}> 0$ such that $|\tau_{ik}| \leq \bar \tau_{ik}$, where $\bar \tau_{ik}$ is the {\em known} upper-bound of each physical robot actuator. Furthermore, each actuator is capable of lifting its own link weight, thus $\bar \tau_{ik} > g_{ik}$.  $\hfill \triangleleft$

Two different controllers are designed, one that is state-feedback and the other that is output-feedback. 

The {\bf bounded state-feedback} scheme relies on velocity measurements and thus $D_{ci}=0$. ${^d\calU}$ is designed as
\begin{equation}
	\label{dPot3}
{^d\calU} = K_s \sum\limits_{k=1}^n s\big(q_{lk} - q_{rk}, \delta_\calU, p_\calU\big),
\end{equation}
where $K_s \in \mathbb{R}_{\ge  0}$, $\delta_\calU>0$, $p_\calU$ is defined in \eqref{pU} and $s\big(\cdot, \cdot, \cdot\big)$ is given in \eqref{sfunc}. The dissipation function ${^s\mathcal{F}_i}(\dq_i)$ is now set as
\begin{equation}
	\label{dis2}
	{^s\mathcal{F}_i}(\dq_i):= D_{si} \sum\limits_{k=1}^n s\big(\dot q_{ik}, \delta_\calF, p_\calF\big),
\end{equation}
with $D_{si} >0$, $\delta_\calF>0$ and $p_\calF$ is defined in \eqref{pF}. In this case, the \textit{Controller (\textbf{C3}),} is a simple \textit{bounded P+d scheme}, which is the bounded version of \eqref{cont:p+d} and that is
	\begin{equation}
		\begin{aligned}
			\t_i = &- K_s \,{\rm sat}_{\delta_\calU}\left(\lceil \q_i - \q_j \rfloor ^{p_{\mathcal{U}}}\right) - D_{si} \,{\rm sat}_{\delta_\calF}\left(\lceil \dq_i\rfloor ^{p_{\mathcal{F}}}\right) \\
			& + \nabla_{\q_i} {^s\calU_i}  (\q_i), \qquad\qquad j\in\{r,l\},\\
		\end{aligned}
		\label{pdBounded}
	\end{equation}

\begin{prop}
Conclusions A and B of Proposition 1 hold if the bilateral teleoperation system (\ref{eq:manipulator_model}) is controlled by (\ref{pdBounded}) provided that \eqref{eq:homogeneity_wheights} is satisfied. Additionally, the actuators do not saturate if Assumption {\bf A3} and condition
\begin{equation}
	\label{scond}
	K_s \delta_\calU + D_{si} \delta_\calF < \bar \tau_{ik} - g_{ik},
\end{equation}  	
also hold, thus $|\tau_{ik}| < \bar \tau_{ik}$. \hfill $\triangleleft$
\end{prop}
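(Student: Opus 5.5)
The proof mirrors Proposition 1 and uses the energy function \eqref{eq:total_energy} with ${^d\calU}$ given by \eqref{dPot3}.

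\emph{Boundedness (Conclusion A).} By Property \textbf{P1}, $s(\cdot,\delta_\calU,p_\calU)$ is $\mathcal{C}^1$, nonnegative, zero only at the origin, and grows at least linearly: $s(x,\delta,p)\ge \frac{1}{p+1}\delta^{p}|x|$ for $|x|\ge\delta$. Hence $\mathcal{H}$ is positive definite and radially unbounded with respect to $(\q_l-\q_r,\dq_l,\dq_r)$. Substituting \eqref{dis2} into \eqref{total_energy_dot_final} gives
$$\dot{\mathcal H}=-\sum_{i}\sum_{k} D_{si}\,\dot q_{ik}\,{\rm sat}_{\delta_\calF}(\lceil \dot q_{ik}\rfloor^{p_\calF})\le 0,$$
because the saturation is odd and strictly increasing. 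This yields $\q_l-\q_r,\dq_i\in\mathcal{L}_\infty$. For boundedness of $\q_i$ I would repeat the Jensen argument of Proposition 1. The integrand $\dot q\,{\rm sat}_{\delta_\calF}(\lceil\dot q\rfloor^{p_\calF})$ is bounded below by $c\min\{|\dot q|^{p_\calF+1},|\dot q|\}$, and since $\dq_i$ is already bounded this is in turn bounded below by $c'|\dot q|^{p_\calF+1}$. The previous inequality then goes through unchanged.

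\emph{Global AS and finite time (Conclusion B).} With $\f_i=\mathbf 0$, LaSalle's principle applies as before. On $\dot{\mathcal H}=0$ we have $\dq_i=\mathbf 0$, so $\ddq_i=\mathbf 0$ and therefore ${\rm sat}_{\delta_\calU}(\lceil\q_l-\q_r\rfloor^{p_\calU})=\mathbf 0$, which gives $\q_l=\q_r$. This establishes global AS, and boundedness of $\q_i$ together with convergence gives a consensus point $\q_c$. For the homogeneous approximation I would note that near the origin the saturations are inactive: for $|x|<\delta$, ${\rm sat}_\delta(\lceil x\rfloor^p)=\lceil x\rfloor^p$. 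Thus for fixed $(\tq,\dq)\in S^{m-1}$ and $\epsilon$ small enough, the dilated vector field coincides exactly with the closed loop of \textbf{C1}. Since $\epsilon^{r_1}|x|\le\epsilon^{r_1}<\delta_\calU$ uniformly on the sphere, this holds uniformly. So ${\bf f}_{Hi}$ is exactly \eqref{fh}, and the vanishing condition reduces to the one already verified in Proposition 1, because the extra saturation discrepancy is identically zero for small $\epsilon$. Lemma 2 then gives global FTS. I expect the main technical care to be here, namely checking that the uniformity required in Lemma 1 survives the piecewise definition; the bound above handles it.

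\emph{No saturation.} From \eqref{pdBounded} and Property \textbf{P1}, $|{\rm sat}_{\delta}(\lceil x\rfloor^{p})|\le\delta^{p}$. Together with Property \textbf{P5} this gives
$$|\tau_{ik}|\le K_s\delta_\calU^{p_\calU}+D_{si}\delta_\calF^{p_\calF}+g_{ik}.$$
Condition \eqref{scond} is stated with $\delta$ rather than $\delta^{p}$. I would therefore read it as a bound on the saturation level itself, which in the notation of \textbf{P1} means interpreting $\delta_\calU$ and $\delta_\calF$ as the saturation levels. Under that reading, or under the additional assumption $\delta\le1$ with $p\ge1$, \eqref{scond} gives $|\tau_{ik}|<\bar\tau_{ik}$. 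Reconciling this exponent is the one point I would need to settle with the authors' convention.
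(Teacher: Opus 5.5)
Your route is the paper's: the same energy $\mathcal H$ built with \eqref{dPot3}, $\dot{\mathcal H}\le0$, LaSalle, the same homogeneous part \eqref{fh}, and \textbf{P5} for the torque bound. On two points you are sharper than the paper.

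First, the vanishing condition. The paper says $\epsilon^{r_2}\dq_i$ enters $B_{\delta_\calF}$ ``because $\dq_i\in\mathcal L_\infty$'', which brings a trajectory property into a statement about the vector field on $S^{m-1}$. Your argument is the right one: both saturations are inactive uniformly on the compact sphere once $\epsilon$ is small, so the dilated field is exactly that of \textbf{C1}. On the sphere, use $|\tilde q_{ik}-\tilde q_{jk}|\le\sqrt2$ rather than $\le1$. Second, you are correct that $|{\rm sat}_\delta(\lceil x\rfloor^p)|\le\delta^p$. The paper's proof simply asserts $|\tau_{ik}|\le K_s\delta_\calU+D_{si}\delta_\calF+g_{ik}$, which is inconsistent with its own definition.

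Your way out of that discrepancy fails, however. Condition \eqref{eq:homogeneity_wheights} forces $p_\calU=2r_2/r_1-1\in(0,1)$ and $p_\calF=2-r_1/r_2\in(0,1)$, so ``$\delta\le1$ with $p\ge1$'' never applies. For $p\in(0,1)$, $\delta^p\le\delta$ holds exactly when $\delta\ge1$. For $\delta_\calU<1$ (the experiments use $0.2$) the implication fails in general. At rest with $|q_{lk}-q_{rk}|\ge\delta_\calU$ one has $|\tau_{lk}|\ge K_s\delta_\calU^{p_\calU}-g_{lk}$. Since $\delta_\calU^{p_\calU-1}\to\infty$ as $\delta_\calU\to0$, you can satisfy \eqref{scond} while $K_s\delta_\calU^{p_\calU}-g_{lk}>\bar\tau_{lk}$. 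So the non-saturation part cannot be proved as written. What is provable is \eqref{scond} with $\delta_\calU^{p_\calU},\delta_\calF^{p_\calF}$ in place of $\delta_\calU,\delta_\calF$, or \eqref{scond} plus the extra hypothesis $\delta_\calU,\delta_\calF\ge1$. Reading $\delta$ as the saturation level redefines the controller \eqref{pdBounded} rather than proving the claim for it.

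Second gap: the Jensen step you import from Proposition 1 does not give $\q_i\in\mathcal L_\infty$. Jensen yields $\int_0^T|\dot q_{ik}|^{p_\calF+1}d\sigma\ge T^{-p_\calF}\left|\int_0^T\dot q_{ik}\,d\sigma\right|^{p_\calF+1}$. So $\alpha_i=D_{si}T^{-p_\calF}$ degenerates, and the bound on $|q_{ik}(T)-q_{ik}(0)|$ grows like $T^{p_\calF/(p_\calF+1)}$; $\dot q=1/(1+t)$ is an example with finite dissipation and unbounded $q$.

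\begin{itemize}
\item Conclusion A does not need this step, since it only asks for $\q_l-\q_r,\dq_i\in\mathcal L_\infty$, which your energy argument already gives.
\item Your Conclusion B does need it: you use ``boundedness of $\q_i$ together with convergence'' to produce $\q_c$. Even bounded $\q_i$ with $\dq_i\to\mathbf 0$ would not force $\q_i$ to converge.
\item A related problem: for fixed $\q_c$ the origin of \eqref{SF:CL} is not an isolated equilibrium, since every state with $\tq_l=\tq_r$, $\dq_i=\mathbf 0$ is one. So the origin is not AS, and Lemmas 1--2 do not apply literally.
\end{itemize}

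Both holes are inherited from the paper, which also asserts $\q_c$ exists without proof. A rigorous Conclusion B needs a semistability-type argument on the consensus set.
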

	
{\bf Proof.} The proof follows {\em verbatim} the proof of Proposition~1 and thus only the main key differences are given.  

Boundedness of trajectories follows replacing \eqref{dPot3} in the total energy \eqref{eq:total_energy} to obtain a positive definite and radially unbounded function w.r.t. $\q_l - \q_r, \dq_i$. In this case $\dot{\mathcal{H}}$ satisfies
$$
\dot{\mathcal{H}} =  - \sum\limits_{i\in\{l,r\}}  \sum\limits_{k=1}^n D_{si} \left\{\begin{matrix} |\dot q_{ik}|^{p_\calF+1}& \text{if} & | \dot q_{ik} | < \delta_\calF, \\
			\delta_\calF^{p_\calF}|\dot q_{ik}| & \text{if} & | \dot q_{ik} | \geq \delta_\calF.
			\end{matrix}\right.
$$
Thus, $\dot{\mathcal{H}}\leq 0$. 

The closed-loop can be written as $\dtq_i=\dq_i$, $\ddq_i = {\bf f}_{Hi}(\tq_l,\tq_r, \dq_i) + {\bf f}_{NHi}(\tq_l,\tq_r, \dq_i)$, with ${\bf f}_{Hi}(\tq_l,\tq_r, \dq_i)$ the same as in \eqref{fh}. The proof that the origin is AS for the homogenous part follows directly. For the homogenous approximation we have to establish a vanishing condition on ${\bf f}_{NHi}(\tq_l,\tq_r, \dq_i)$, given by
$$
\begin{aligned}
	&{\bf f}_{NHi} = - {\bf M}_i (\tq_i + \q_c)^{-1}{\bf C}_i (\tq_i + \q_c, \dq_i) \dq_i \\ & \,\,\,\,- K_s\left[{\bf M}_i (\tq_i + \q_c)^{-1} - {\bf M}_i (\q_c)^{-1}\right]{\rm sat}_{\delta_\calU}\left(\lceil \tq_i - \tq_j \rfloor ^{p_{\mathcal{U}}}\right) \\
	& \,\,\,\, - D_{si} \left[ {\bf M}_i (\tq_i + \q_c)^{-1} - {\bf M}_i (\q_c)^{-1}\right] {\rm sat}_{\delta_\calF}\left(\lceil \dq_i\rfloor ^{p_{\mathcal{F}}}\right) \\
	& \,\,\,\, + K_s{\bf M}_i (\q_c)^{-1}\Big[\lceil \tq_i - \tq_j \rfloor ^{p_{\mathcal{U}}} - {\rm sat}_{\delta_\calU}\left(\lceil \tq_i - \tq_j \rfloor ^{p_{\mathcal{U}}}\right)\Big]\\
	& \,\,\,\, + D_{si} {\bf M}_i (\q_c)^{-1}\Big[\lceil \dq_i\rfloor ^{p_{\mathcal{F}}} -{\rm sat}_{\delta_\calF}\left(\lceil \dq_i\rfloor ^{p_{\mathcal{F}}}\right) \Big].
\end{aligned}
$$
As in the proof of Proposition 1, the first three terms on the right hand side of ${\bf f}_{NHi}$ vanish as $\epsilon\to 0$. Consider now the last term of ${\bf f}_{NHi}$ and note that
$$
\lim_{\epsilon \to 0} {\lceil \epsilon^{r_2}\dq_i \rfloor ^{p_{\mathcal{F}}} - {\rm sat}_{\delta_\calF}\left(\lceil \epsilon^{r_2}\dq_i \rfloor ^{p_{\mathcal{F}}}\right) \over \epsilon^{2r_2 - r_1}} = 0,
$$
because ${\rm sat}_{\delta_\calF}\left(\lceil \epsilon^{r_2}\dq_i \rfloor ^{p_{\mathcal{F}}}\right)= \lceil \epsilon^{r_2}\dq_i \rfloor ^{p_{\mathcal{F}}}$ when $\epsilon^{r_2}\dq_i\in B_{\delta_\calF}$. Note that, as $\epsilon \to 0$, $\epsilon^{r_2}\dq_i\to 0$ because $\dq_i\in\mathcal{L}_\infty$. Hence, as $\epsilon \to 0$, the term $\epsilon^{r_2}\dq_i$ always reaches the ball $B_{\delta_\calF}$. The fourth term in the right hand side of ${\bf f}_{NHi}$ vanishes using the same argument. Thus, the closed-loop accepts a homogeneous approximation of negative degree $r_2-r_1$. 

Finally, for each $k\in\{1,...,n\}$, the absolute value of each element of the controller \eqref{pdBounded} is
$$
\begin{aligned}
|\tau_{ik}| = &|- K_s \,{\rm sat}_{\delta_\calU}\left(\lceil q_{ik} - q_{jk} \rfloor ^{p_{\mathcal{U}}}\right) - D_{si} \,{\rm sat}_{\delta_\calF}\left(\lceil \dot q_{ik}\rfloor ^{p_{\mathcal{F}}}\right) \\
			& + \nabla_{q_{ik}} {^s\calU_i}  (\q_i)|,	
\end{aligned}
$$
thus
$$
|\tau_{ik}| \leq K_s |{\rm sat}_{\delta_\calU}\left(\lceil q_{ik} - q_{jk} \rfloor ^{p_{\mathcal{U}}}\right)| + D_{si} |{\rm sat}_{\delta_\calF}\left(\lceil \dot q_{ik}\rfloor ^{p_{\mathcal{F}}}\right)| + g_{ik},
$$
where we have used Property {\bf P5} to obtain this bound. Therefore, from the saturation functions we also have that
$$
|\tau_{ik}| \leq K_s \delta_\calU + D_{si} \delta_\calF + g_{ik}.
$$
Setting $K_s \delta_\calU + D_{si} \delta_\calF + g_{ik}< \bar \tau_{ik}$ ---which is the same as \eqref{scond}--- ensures that $|\tau_{ik}| < \bar \tau_{ik}$. This finishes the proof. \hfill $\square$

The last controller that we design in this work is a {\bf bounded output-feedback} scheme. This algorithm arises setting ${^s\mathcal{F}_i}(\dq_i)=0$, $D_{ci}>0$ ---in \eqref{newDynCon}--- and ${^d\calU}$ as
\begin{equation}
	\label{dPot4}
	\begin{aligned}
{^d\calU} = & \sum\limits_{k=1}^n K_s\,s\big(q_{lk} - q_{rk}, \delta_\calU, p_\calU\big)\\
&+\sum\limits_{k=1}^n \left[  K_{cl}\,s\big({\tilde \theta}_{lk}, \delta_\calF, p_\calU\big) + K_{cr}\,s\big({\tilde \theta}_{rk}, \delta_\calF, p_\calU\big) \right] 
\end{aligned}
\end{equation}
where $K_s, K_{ci}>0$ and $p_\mathcal{U}$ is defined in \eqref{pU}. Thus, the controllers are
\begin{equation}
		\begin{aligned}
			\t_i = &- K_s \,{\rm sat}_{\delta_\calU}\left(\lceil \q_i - \q_j \rfloor ^{p_{\mathcal{U}}} \right) + K_{ci} \,{\rm sat}_{\delta_\calF}\big(\lceil \tt_i\rfloor ^{p_{\mathcal{U}}}\big) \\
			& + \nabla_{\q_i} {^s\calU_i}  (\q_i), \qquad\qquad j\in\{r,l\}\\
		\dth_i =  & -  \Big({K_{ci}\over D_{ci}}\Big)^{1 \over p_\mathcal{F}}  \,{\rm sat}_{\delta_\calF}\left(\lceil \tt_i\rfloor ^{r_2\over r_1}\right).
		\end{aligned}
\label{boundedpdwithout}
\end{equation}

The next proposition is reported without proof for sake of space and because it can be derived mimicking the steps in the previous proofs.

\begin{prop}
Conclusions A and B of Proposition 1 hold if the bilateral teleoperation system (\ref{eq:manipulator_model}) is controlled by (\ref{boundedpdwithout}) provided that \eqref{eq:homogeneity_wheights} is satisfied. Additionally, the actuators do not saturate if Assumption {\bf A3} and condition
\begin{equation}
	\label{scond2}
	K_s \delta_\calU + K_{ci} \delta_\calF < \bar \tau_{ik} - g_{ik},
\end{equation}  	
also hold, thus $|\tau_{ik}| < \bar \tau_{ik}$. \hfill $\triangleleft$
\end{prop}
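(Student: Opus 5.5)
The plan follows Propositions 2 and 3: we combine the energy argument for the output-feedback case with the saturation-vanishing argument of the bounded state-feedback case. The closed loop is written in the coordinates $(\tq_i,\dq_i,\tt_i)$. It reads $\dtq_i=\dq_i$ and
$$\ddq_i={\bf M}_i(\tq_i+\q_c)^{-1}\big[\f_i-{\bf C}_i\dq_i-K_s{\rm sat}_{\delta_\calU}(\lceil \q_i-\q_j\rfloor^{p_\calU})+K_{ci}{\rm sat}_{\delta_\calF}(\lceil\tt_i\rfloor^{p_\calU})\big],$$
together with $\dtt_i=-(K_{ci}/D_{ci})^{1/p_\calF}{\rm sat}_{\delta_\calF}(\lceil\tt_i\rfloor^{r_2/r_1})-\dq_i$. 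Note that $\dth_i$ follows from \eqref{newDynCon} with ${^c\calU}$ built from \eqref{dPot4}. Since $\nabla_{\tt_i}$ of $K_{ci}s(\cdot,\delta_\calF,p_\calU)$ is a saturated power, the dynamics is a saturated signed power. Inverting the exponent $1/p_\calF$ turns $p_\calU$ into $r_2/r_1$ inside the saturation, by Property \textbf{P1} and the identity $p_\calU/p_\calF=r_2/r_1$, up to a rescaling of the saturation level. Making this bookkeeping consistent is a minor point to check.

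For Conclusion A, I would substitute \eqref{dPot4} into \eqref{eq:total_energy}. By \eqref{sfunc} and the linear lower bound noted after \textbf{P1}, each $s(\cdot,\delta,p)$ is positive definite and radially unbounded, so $\calH$ is positive definite and radially unbounded in $(\q_l-\q_r,\dq_i,\tt_i)$. From \eqref{total_energy_dot_final} with ${^s\calF_i}=0$ we get $\dot\calH=-\sum_i D_{ci}\sum_k|\dot\theta_{ik}|^{p_\calF+1}\le0$, hence these signals are in $\mathcal{L}_\infty$. The Jensen argument of Proposition 1 applied to $\int_0^t|\dot\theta_{ik}|^{p_\calF+1}$ bounds $\th_i$. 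Together with $\tt_i\in\mathcal{L}_\infty$ this yields $\q_i\in\mathcal{L}_\infty$, and therefore $\tq_i\in\mathcal{L}_\infty$.

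For Conclusion B, set $\f_i={\bf 0}$ so that ${\mathcal E}_i$ drops out. The output $\dth_i$ is detectable as in Proposition 2. Indeed, $\dth_i={\bf 0}$ forces ${\rm sat}_{\delta_\calF}(\lceil\tt_i\rfloor^{r_2/r_1})={\bf 0}$, hence $\tt_i={\bf 0}$ by strict monotonicity. Then $\q_i=\th_i$ is constant, so $\dq_i=\ddq_i={\bf 0}$. The $\ddq_i$ equation then gives ${\rm sat}_{\delta_\calU}(\lceil\q_i-\q_j\rfloor^{p_\calU})={\bf 0}$, i.e. $\q_l=\q_r$. LaSalle's principle and Lemma 3 then yield global AS, and therefore global AS of $(\tq_i,\dq_i,\tt_i)={\bf 0}$ for some $\q_c$. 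For the homogeneous approximation I use weights $r_1,r_2,r_1$ on $\tq_i,\dq_i,\tt_i$. I take ${\bf f}_{Hi}$ to be the unsaturated vector field of Proposition 2, and take $-(K_{ci}/D_{ci})^{1/p_\calF}\lceil\tt_i\rfloor^{r_2/r_1}-\dq_i$ as the homogeneous part of the $\dtt_i$ equation. The homogeneous system is then exactly that of Proposition 2, of degree $r_2-r_1<0$ and AS via $V$ in \eqref{V}. The remainder splits into three groups. The Coriolis term and the inertia-mismatch terms vanish as in Proposition 1. The terms $\lceil\cdot\rfloor^{p}-{\rm sat}_\delta(\lceil\cdot\rfloor^{p})$, in both the $\ddq_i$ and $\dtt_i$ equations, vanish because they are identically zero once $\epsilon^{r_1}\tq_i$ and $\epsilon^{r_1}\tt_i$ lie in the unsaturated ball. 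For ${\bf x}\in S^{m-1}$ this holds uniformly for small $\epsilon$, which is what the vanishing condition requires. Lemma 2 then gives global FTS.

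Finally, the bound on $\tau_{ik}$ follows as in Proposition 3. We have $|\tau_{ik}|\le K_s\delta_\calU^{p_\calU}+K_{ci}\delta_\calF^{p_\calU}+g_{ik}$ by \textbf{P5}. This reduces to \eqref{scond2} as stated when $\delta\le1$, or with $\delta$ read as the saturation level itself. The main obstacle I expect is not the stability argument but this exponent and saturation-level bookkeeping. It arises when the dissipation gradient is inverted through $\lceil\cdot\rfloor^{1/p_\calF}$ to obtain the saturated form of $\dth_i$ in \eqref{boundedpdwithout}, and again when matching the actuator bound to \eqref{scond2}. I would first verify \eqref{newDynCon} componentwise, using \textbf{P1} to commute the power with the saturation.
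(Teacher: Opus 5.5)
Your route is the one the paper intends: it gives no proof of this proposition and says it follows by mimicking Propositions 2 and 3. Your stability argument faithfully transcribes those proofs and is in places more careful. You finish the detectability step by using the $\ddq_i$ equation to get $\q_l=\q_r$. You also state the vanishing of the saturation remainders uniformly on $S^{m-1}$, rather than appealing to $\dq_i\in\mathcal{L}_\infty$.

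The exponent bookkeeping you worry about is exact. By \textbf{P1}, ${\rm sat}_\delta(\lceil x\rfloor^p)=\lceil{\rm sat}_\delta(x)\rfloor^p$ saturates the \emph{argument} at $\delta$. Hence $\lceil{\rm sat}_{\delta_\calF}(\lceil\tt_i\rfloor^{p_\calU})\rfloor^{1/p_\calF}={\rm sat}_{\delta_\calF}(\lceil\tt_i\rfloor^{r_2/r_1})$ with the same $\delta_\calF$, and no rescaling is needed.

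\textbf{The actuator bound is where your proposal fails.} Your estimate $|\tau_{ik}|\le K_s\delta_\calU^{p_\calU}+K_{ci}\delta_\calF^{p_\calU}+g_{ik}$ is correct. However, \eqref{eq:homogeneity_wheights} forces $p_\calU\in(0,1)$, so $\delta^{p_\calU}\le\delta$ holds only for $\delta\ge1$. Your claim that the bound reduces to \eqref{scond2} when $\delta\le1$ has the inequality reversed.

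For $\delta_\calU<1$ the conclusion can actually fail. Since $\delta_\calU^{p_\calU-1}\to\infty$ as $\delta_\calU\to0$, you can satisfy \eqref{scond2} while $K_s\delta_\calU^{p_\calU}>\bar\tau_{ik}+g_{ik}$. Then any initial condition with $|q_{lk}(0)-q_{rk}(0)|\ge\delta_\calU$ and $\th_i(0)=\q_i(0)$ gives $|\tau_{ik}(0)|\ge K_s\delta_\calU^{p_\calU}-g_{ik}>\bar\tau_{ik}$.

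Your fallback of ``reading $\delta$ as the saturation level'' is not available. The paper's definition of ${\rm sat}_\delta(\lceil\cdot\rfloor^p)$, the same one that made your $\dth_i$ computation exact, has output level $\delta^p$. What you can prove is $|\tau_{ik}|<\bar\tau_{ik}$ under $K_s\delta_\calU^{p_\calU}+K_{ci}\delta_\calF^{p_\calU}<\bar\tau_{ik}-g_{ik}$. Condition \eqref{scond2} implies this only when $\delta_\calU,\delta_\calF\ge1$. The paper's proof of Proposition 3 makes the same slip.

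Two further steps are inherited from the paper and do not hold as written.

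\emph{The Jensen step.} It yields only $D_{ci}\int_0^T|\dot\theta_{ik}|^{p_\calF+1}\ge D_{ci}T^{-p_\calF}|\theta_{ik}(T)-\theta_{ik}(0)|^{p_\calF+1}$. The constant degenerates as $T\to\infty$, so $\th_i$ is not bounded this way, and under \textbf{A1} absolute positions need not be bounded at all. Drop this step: Conclusion A concerns only $\q_l-\q_r$, $\dq_i$ (and $\tt_i$), which your energy argument already gives.

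\emph{The homogeneous-approximation step.} In the coordinates $(\tq_l,\tq_r,\dq_i,\tt_i)$, every point of $\{\tq_l=\tq_r,\ \dq_i=\tt_i={\bf 0}\}$ is an equilibrium of both the closed loop and its homogeneous part. Moreover, $V$ in \eqref{V} vanishes on that whole set. So the origin is not asymptotically stable, and Lemmas 1--2 cannot be applied verbatim. A rigorous version needs a finite-time semistability form of the homogeneous-approximation argument, relative to the consensus set.
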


\section{Additional Remarks}

{\it Remark 2}. For simplicity and clarity of exposition, all the designed controllers use scalar gains. However, they can also be defined as positive diagonal matrices, where each diagonal element corresponds to the gain of one degree of freedom of the robots. The proofs remain essentially unchanged. $\hfill \triangleleft$ 
	
{\it Remark 3}. The stability proof requires that $r_1 > r_2$, resulting in a negative-degree HA in the closed-loops. However, setting $r_1 = r_2$ yields a zero-degree HA, from which global asymptotic stability and local exponential stability can be concluded (see {\it Lemma~2}). $\hfill \triangleleft$
	
{\it Remark 4}. The controller dynamics is first-order because we set ${^c\calK_i}(\dth_i) := 0$. Nevertheless, it is also possible to achieve FT convergence with second-order controllers by setting ${^c\calK}i(\dth_i) := \dth_i^{\top} \mathbf{M}{ci}, \dth_i$, with $\mathbf{M}{ci}=\mathbf{M}{ci}^\top \in \rea^{n\times n}$ a positive definite matrix. In this case, the proofs only require including the controller's kinetic energy in the total energy, and they remain unchanged. $\hfill \triangleleft$ 
	
{\it Remark 5}. Energy-shaping controllers provide a clear physical interpretation of the control action. For {\bf C1}--{\bf C4}, the proportional terms can be understood as nonlinear springs that help reduce the error when disturbances are present. The damping-injection terms reduce overshoot and oscillations, but if these gains are incorrectly tuned, they may amplify sensor noise; in such cases, the use of controllers {\bf C2} and {\bf C4} is advised, as they do not rely on velocity measurements. $\hfill \triangleleft$ 

{\it Remark 6}. Setting higher control gains leads to faster and more accurate responses, but requires larger control efforts. The bounded schemes {\bf C3} and {\bf C4} achieve the control objective while respecting the force limits of the actuators. $\hfill \triangleleft$

{\it Remark 7}. Setting $2r_2=r_1$ leads to $p_\mathcal{U} = p_\mathcal{F} = 0$, which results in discontinuous controllers. While these controllers are more robust, they are prone to chattering; thus, the setting $2r_2=r_1$ should be avoided. In contrast, setting $r_2=r_1$ leads to $p_\mathcal{U} = p_\mathcal{F}= 1$, which returns linear controllers. These provide smoother responses at the cost of weaker robustness properties and the loss of FT convergence.  $\hfill \triangleleft$ 

{\it Remark 8}. The only model-dependent term in {\bf C1}--{\bf C4} is the potential energy cancellation term. If model uncertainty arises, the FT convergence property is lost. However, these schemes are more robust than their linear counterparts, achieving smaller steady-state errors. Naturally, a sliding mode scheme performs better, at the cost of possible noise amplification and chattering.  $\hfill \triangleleft$

{\it Remark 9}. We have reported only four different controllers. However, the family of energy-shaping schemes in \eqref{eq:energy_shaping_controller} or in     \eqref{NovelControl} accepts several more designs, as long as the detectability condition plus the homogeneous approximation of negative degree hold. $\hfill \triangleleft$

}

\section{Simulations and Experiments}
\subsection{Simulation Results}
In this section, we compare the controller \textbf{C1} against the Terminal-Sliding-Mode-based Controller (\textbf{TSMC}) from \cite{Nguyen2021}. The local and remote are the same 2-DoF robots from \cite{Nguyen2021}, with $M_i = [1.8, 1.6]$ kg, $l_i = [0.8,0.6]$ m, $l_{ci} = [0.4,0.3]$ m and $I_i = [0.096, 0.048]$ kg$\cdot$m\textsuperscript{2}. The control gains for \textbf{C1} are: $K_s = 6$, $D_s = 8$, $r_1 = 1.5$ and $r_2 = 1$. The control gains for \textbf{TSMC} are: $\lambda = 3$, $\Lambda = 10$, $K_i = 3\cdot \mathbf{I}_2$, $\Gamma_i = 4\cdot \mathbf{I}_2$, $\Psi_i = \mathbf{I}_2$, $\delta = \rho = 0.5$ and $\epsilon = 0.001$. The simulations use the Euler’s discretization with fixed step $T_s = 0.1$ ms. 

\begin{figure}[h]
	\centering
	\includegraphics[trim={1.35cm 4.6cm 2.8cm 2.8cm},clip,width=1\columnwidth]{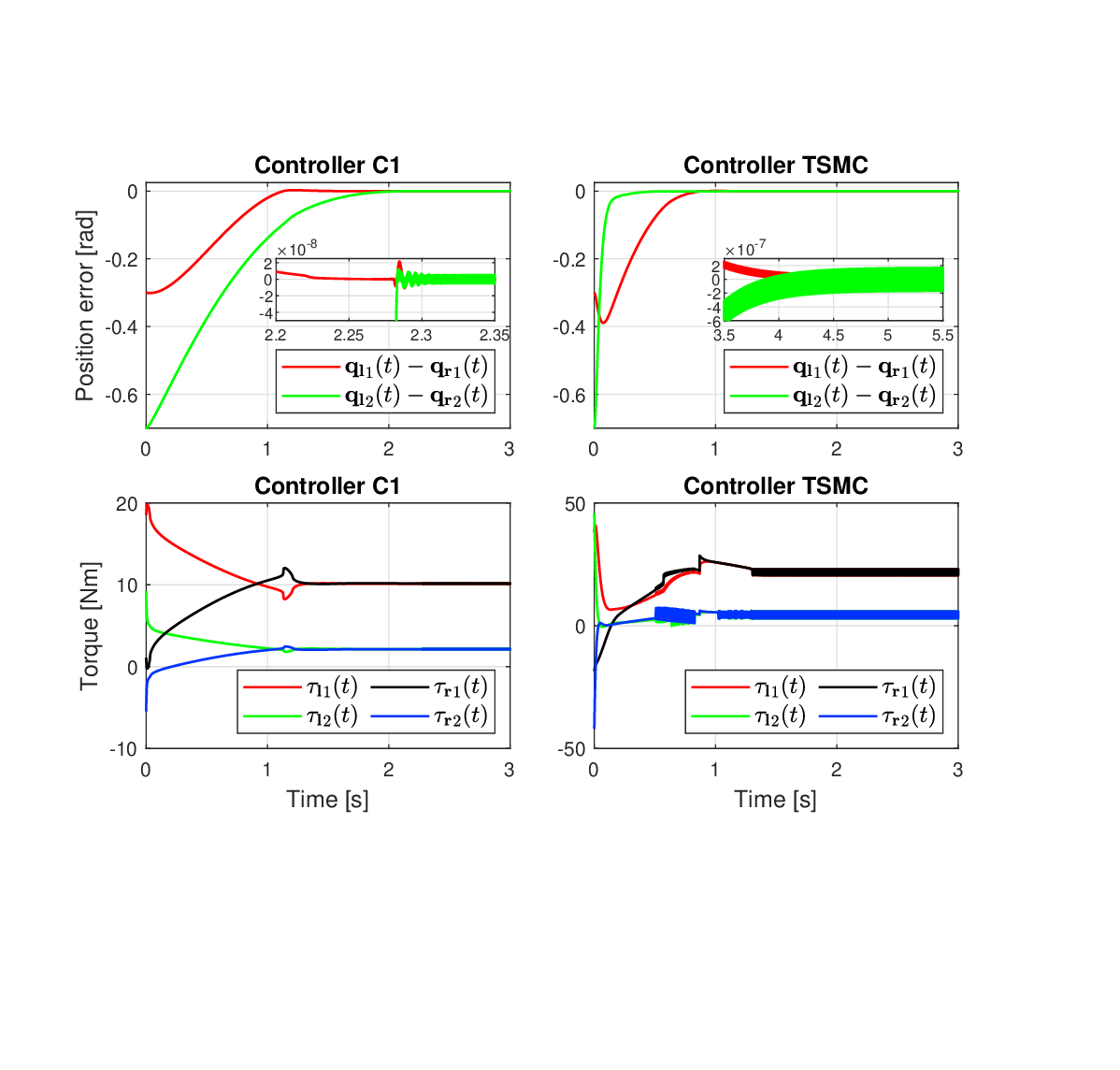}
	\caption{Position error and torques of controllers {\bf C1} and {\bf TSMC}.}
	\label{fig:Comparison_TSMC}
\end{figure}

In the beginning, the robots parted from rest in their initial positions $\q_l(0) = [1,-0.4]^\top$ rad and $\q_r(0) = [1.3, 0.3]^\top$ rad, and they were set in free motion. Figure \ref{fig:Comparison_TSMC} shows the controllers achieving FT convergence of errors to zero at $t = 2.3$ s for \textbf{C1} and $t=5$ s for \textbf{TSMC}. Nevertheless, \textbf{TSMC} presented an aggressive control law as chattering is evident, whereas \textbf{C1} was able to attenuate this effect.

\subsection{Experimental Results}	
Experimental results are presented to validate the controllers of Section V. We used a 6-DoF teleoperation system comprising a local Kinova\textsuperscript{\textregistered} Gen3 Ultra-lightweight (\textbf{KG3}) and a remote Kinova\textsuperscript{\textregistered} Gen3 Lite (\textbf{LT3}) robot. The first experiment tested the robustness of the FT controllers \textbf{C1}--\textbf{C4} against their asymptotic equivalent when they are in free motion. As it is common in practical implementations, modeling errors and static friction became a source of uncertainty, among others.

Due to their differences in model, distinct control gains were used for each robot, see \textit{Remark~2}. Also, individual gains were set for joints 1--3 of \textbf{LT3} to account for varying motor sizes. The control gains of \textbf{C1}--\textbf{C4} for \textbf{KG3} were: \( K_{s_l} = 15 \), \( K_{c_l} = 2 \), \( D_{s_l} = 2 \), \( D_{c_l} = 20 \), \( \delta_{s_l} = 0.2 \) and \( \delta_{D_l} = 0.5 \). The control gains for \textbf{LT3} were: \( K_{sr} = [4,17,4,1.5,1.5,1.5] \), \( K_{cr} = 1 \), \( D_{sr} = 0.5 \), \( D_{cr} = 20 \), \( \delta_{sr} = 0.2 \) and \( \delta_{Dr} = 0.5 \). The FT versions of \textbf{C1}--\textbf{C4} used \( r_1 = 1.5 \) and \( r_2 = 1.1 \), whereas their asymptotic versions used \( r_1 = r_2 = 1 \). The controllers were implemented using the Kinova\textsuperscript{\textregistered} Kortex\textsuperscript{TM} API, enabling low-level torque control at 1\,ms intervals and access to built-in position and velocity measurements. Communication between robots was established via UDP-IP over a LAN connection.

The experiments started with the robots parting from rest with different initial positions, for \textbf{KG3} was: \( \q_l(0) = [\frac{\pi}{9}, \frac{\pi}{6}, \frac{-2\pi}{9}, \frac{-\pi}{6}, \frac{\pi}{18}, 0]^\top \) rad, and for \textbf{LT3} was: \( \q_r(0) = [\frac{-\pi}{12}, \frac{\pi}{12}, \frac{5\pi}{36}, \frac{-1\pi}{180}, \frac{2\pi}{9}, \frac{-\pi}{3}]\top \) rad, with \( \th_i(0) = \q_i(0) \). Figure~\ref{fig:Comparison} presents the norm of the position error between robots as the evaluation metric. The controller was activated in $t=1$ s, then, both robots tried to reach a consensus position. In general, the FT controllers significantly reduced the error and presented faster transient responses.

\begin{figure}[h]
	\centering
	\includegraphics[trim={1.7cm 5.6cm 2.8cm 3.75cm},clip,width=1\columnwidth]{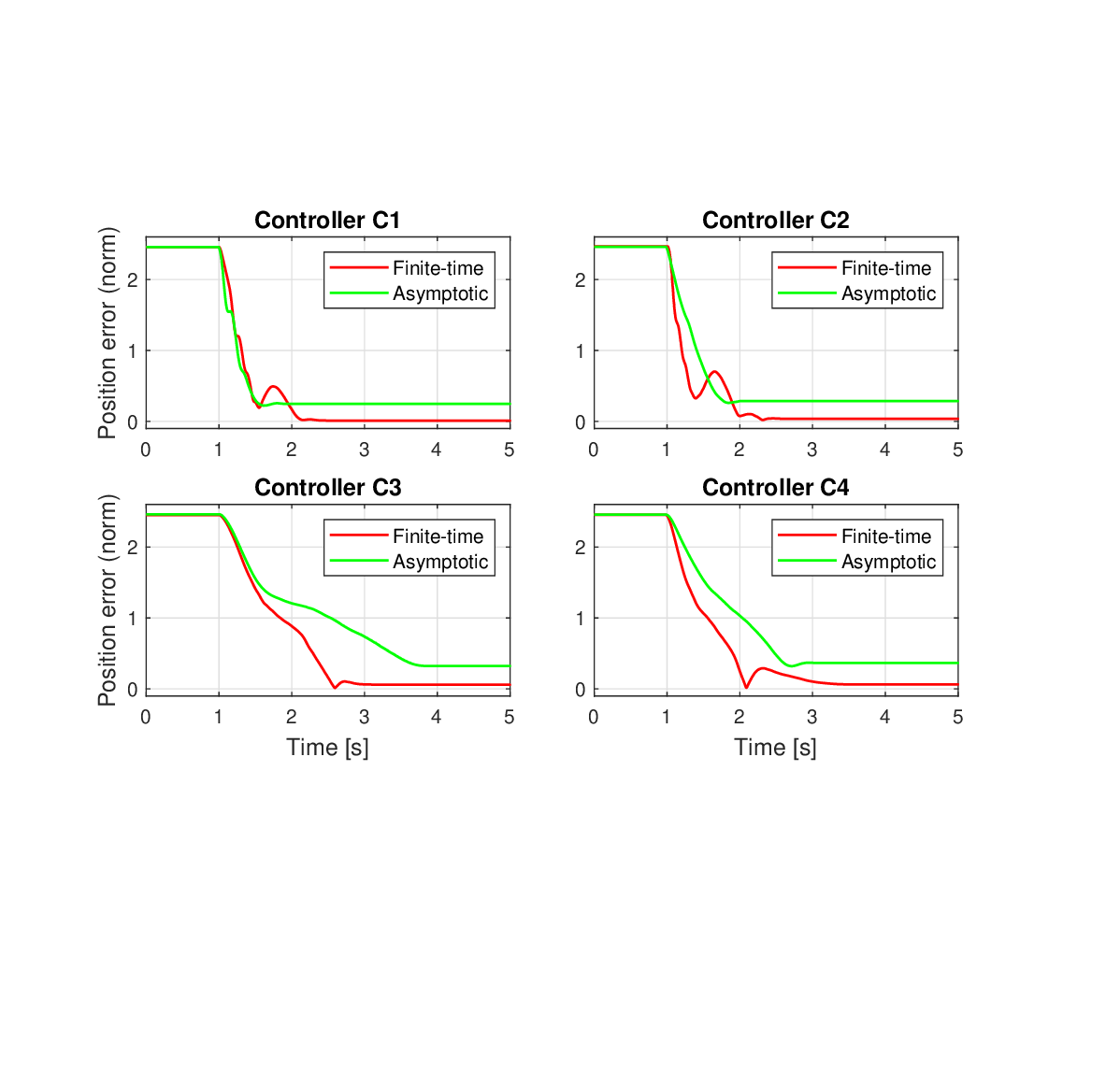}
	\caption{Position error comparison between controller {\bf C1}--{\bf C4} in their finite-time and asymptotic version.}
	\label{fig:Comparison}
\end{figure}

The second experiment tested the tracking capabilities of the saturated controller \textbf{C4}. For this, external forces were applied to each joint of \textbf{LT3} to change their position. Figure \ref{fig:Saturated} shows how \textbf{KG3} successfully tracked the position of \textbf{LT3} with minimal steady state error. To show the saturated components in \textbf{C4}, there were introduced forces that maxed out the controller in the third joints at $t = 21$ s. When these forces disappeared, the robots returned to a consensus position again.
	
\begin{figure}[h]
	\centering
	\includegraphics[trim={1.35cm 2.7cm 2.7cm 2.3cm},clip,width=1\columnwidth]{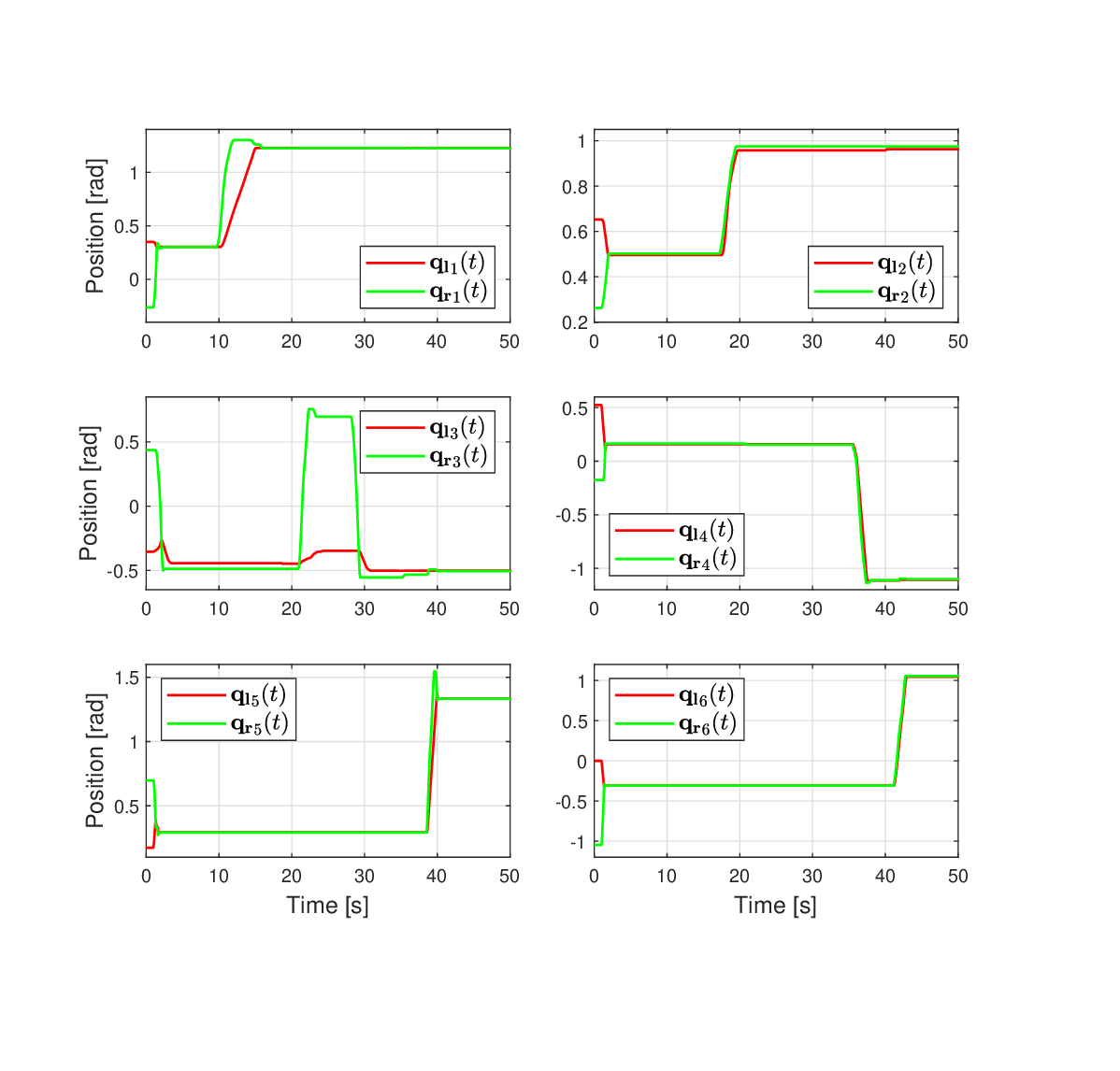}
	\caption{Position tracking of controller {\bf C4} under external forces.}
	\label{fig:Saturated}
\end{figure}

\section{Conclusion}
	
{In this work, we propose a family of novel continuous  P+d controllers to achieve finite-time convergence in bilateral teleoperation systems. These controllers are obtained from the energy-shaping methodology, and their design ensure that the closed-loop system admits an HA of negative degree. The resulting schemes can be designed for state-feedback or for output-feedback. Additionally, we also report controllers that can avoid actuator saturation. These schemes are simple to implement. Simulations and experiments validate the effectiveness of the controllers. Future work focuses on extending these controllers to handle delays in the communications and parameter uncertainty. }


\end{document}